\documentclass{article} % For LaTeX2e
\usepackage{iclr2026_conference,times}

\usepackage{amsmath,amsfonts,bm}

\def\eqref#1{equation~\ref{#1}}
\def\1{\bm{1}}

\DeclareMathAlphabet{\mathsfit}{\encodingdefault}{\sfdefault}{m}{sl}
\SetMathAlphabet{\mathsfit}{bold}{\encodingdefault}{\sfdefault}{bx}{n}

\usepackage{hyperref}
\usepackage{url}
\usepackage{enumitem}
\usepackage{amsmath}
\usepackage{amssymb}
\usepackage{amsthm}
\newtheorem{theorem}{Theorem}
\newtheorem{proposition}{Proposition}

\usepackage{colortbl}
\usepackage{xcolor}
\definecolor{baselinegray}{gray}{0.9}
\usepackage{graphicx}
\usepackage{subcaption}
\usepackage{multirow}
\usepackage{booktabs}
\usepackage{comment}

\newcommand{\old}{\mathrm{old}}
\newcommand{\dd}{\mathrm{d}}

\title{FlowAWR: Online Adaptive Flow Reinforcement via Advantage-Weighted Rectification}

\iclrfinalcopy

\author{%
Zheming Fu$^{1,2,3}$, \hspace{.3em}
Ruizhe He$^{2,4}$, \hspace{.3em}
Wei Shang$^{1}$, \hspace{.3em}
Xiaoxiao Ma$^{2,5}$, \hspace{.3em}\\
\textbf{Lei Wang$^{1}$}\thanks{{} {} Corresponding authors.}, \hspace{.3em}
\textbf{Chang Liu$^{3}$}\footnotemark[1], \hspace{.3em}
\textbf{Siming Fu$^{2}$}\footnotemark[1]\\
[1ex]
$^{1}$ Beihang University, 
$^{2}$ Joy Future Academy, 
$^{3}$ Zhongguancun Academy, 
$^{4}$ Zhejiang University, \\
$^{5}$ University of Science and Technology of China
}

\begin{document}

\maketitle
\lhead{}  % Arxiv release: uncomment to clear the ICLR header text.

% =====================================================
% Body migrated from FlowAWR/main.tex.
%   * \cite{} -> \citep{} (natbib parenthetical).
%   * \keywords{} removed (ICLR has no keyword field).
%   * llncs-only commands (\titlerunning, \authorrunning, \institute, \inst, \orcidlink) removed.
%   * Figures resolve through Figure/ at the iclr2026/ directory root.
% =====================================================

\begin{abstract}
  Aligning generative flow models on continuous spaces via online reinforcement learning is constrained by intractable trajectory likelihoods. 
  Existing density-approximated policy gradient methods rely on stochastic SDE samplers to construct tractable transition kernels, which introduce training-inference inconsistencies and necessitates Classifier-Free Guidance (CFG).
  While implicit frameworks such as DiffusionNFT directly optimize forward-process velocity fields, its heuristic fixed-magnitude corrections prevent optimization strength from relative intra-group quality.
  We propose \textit{Flow Advantage-Weighted Rectification} (\textbf{FlowAWR}), a paradigm that recasts continuous generative policy optimization as supervised regression toward a theoretically optimal velocity field.
  Starting from the optimal policy of a KL-constrained reward maximization, FlowAWR derives the optimal velocity field that admits a magnitude-aware, advantage-weighted rectification form, yielding SDE-free optimization and CFG-free generation.
  In comparative evaluations on SD3.5-Medium, FlowAWR achieves improved alignment performance alongside a 2$\times$ to 5$\times$ convergence acceleration over DiffusionNFT (e.g., reaching a 24.12 PickScore in 1.2k steps, versus 23.82 in 2.0k steps for DiffusionNFT and 23.50 in $>$4k steps for FlowGRPO). Under multi-reward constraints, FlowAWR sustains generation quality, satisfying structural rules while maintaining stable out-of-domain performance.
\end{abstract}

\section{Introduction}
\label{sec:intro}

The empirical success of Reinforcement Learning (RL) in the post-training of Large Language Models (LLMs)~\citep{christiano2017deep, ziegler2019fine, achiam2023gpt, comanici2025gemini, yang2025qwen3} has motivated its extension to continuous generative spaces. 
Within this domain, Flow Matching~\citep{lipman2022flow, liu2023flow, song2023consistency} has emerged as a prevalent paradigm, offering a deterministic trajectory framework for modern diffusion models~\citep{esser2024scaling, schmid2024flux, seedream2025seedream, wu2025qwen}.
However, adapting RL to this continuous space introduces a structural constraint: the approximation of trajectory likelihood. Unlike AR models that compute exact discrete token probabilities, continuous flow models inherently lack tractable step-wise transition densities~\citep{song2020scorebased}. 
Ranging from offline applications of Direct Preference Optimization (DPO)~\citep{wallace2024diffusion, yang2024using} to recent online policy gradient approaches~\citep{liu2025flow, xue2025dancegrpo}, this likelihood bottleneck consistently compromises the computational efficiency and theoretical guarantees of RL on continuous spaces.

To circumvent this likelihood barrier, a prevalent strategy discretizes the continuous generative process into a multi-step Markov Decision Process (MDP)~\citep{black2023training, fan2023dpok}. Building upon this, recent GRPO-based methods~\citep{liu2025flow, xue2025dancegrpo} inject stochasticity via SDE samplers to construct tractable Gaussian transition kernels, enabling the evaluation of policy gradients. Although subsequent improvements attempt to interleave ODE and SDE steps~\citep{li2025mixgrpo}, this paradigm fails to resolve the forward inconsistency between stochastic training trajectories and deterministic inference rollouts. 
Furthermore, by failing to implicitly distill the guidance signal into the vector field, these methods retain a structural reliance on CFG~\citep{ho2022classifier} to maintain generation quality, thereby imposing computational overhead during inference.

An alternative paradigm leverages implicit optimization and velocity field guidance to bypass explicit density estimation. 
Early reward-weighted methods~\citep{lee2023aligning, dong2023raft} are typically restricted to offline settings and lack explicit penalization mechanisms for low-quality samples. 
Recent online RL frameworks, such as DiffusionNFT~\citep{zheng2025diffusionnft}, have reformulated policy optimization as a supervised velocity regression task, achieving notable improvements in both training efficiency and generation performance over GRPO-style methods. 
However, it relies on a heuristic definition of velocity correction direction and applies modifications with a fixed magnitude.
Furthermore, its utilization of intra-group samples is restricted to computing a scalar normalized reward, which merely balances the weights between preset positive and negative objectives. Consequently, this simultaneous exploration operates strictly at the individual sample level, thereby failing to leverage relative intra-group quality to directly determine the magnitude of velocity updates.

Herein, we propose Flow Advantage-Weighted Rectification (\textbf{FlowAWR}), an online RL paradigm for continuous generative spaces that operates without stochastic trajectory formulation or heuristic field modifications. 
Our framework originates from the exact closed-form optimal policy derived from reward maximization under a temporal KL divergence constraint.
By mapping this optimal terminal distribution onto the intermediate probability paths, we analytically derive the corresponding optimal velocity field.
We demonstrate that this optimal field is equivalent to rectifying the reference velocity field by an advantage-weighted expectation.
Consequently, FlowAWR recasts flow RL as a supervised regression task toward this theoretical optimum. By incorporating a group-based advantage function, our framework quantifies the continuous relative quality of intra-group samples, replacing rigid binary push-pull operations with magnitude-aware rectifications.

Our method preserves the architectural benefits of likelihood-free approaches by eliminating the forward inconsistency induced by SDE sampling, while its advantage-weighted updates implicitly distill alignment trajectories into the network parameters to enable CFG-free generation.
To evaluate the efficacy of FlowAWR, we conduct systematic experiments on the SD3.5-Medium (2.5B parameters) model~\citep{esser2024scaling}. 
Quantitative results indicate that, in the single-reward setting, FlowAWR achieves a $2\times$ to $5\times$ convergence acceleration alongside improved performance compared to DiffusionNFT. 
For instance, when optimizing for PickScore, FlowAWR elevates the metric to 24.12 within 1.2k steps. In contrast, DiffusionNFT requires 2.0k steps to reach 23.82, and FlowGRPO necessitates over 4k steps while relying on CFG to achieve 23.50. 
Furthermore, under the complex distribution shifts induced by multi-reward optimization, our decoupled expert-branching strategy maintains optimization stability and mitigates out-of-domain aesthetic degradation.

Our primary contributions are summarized as follows:
\begin{itemize}[noitemsep, topsep=0pt]
\item We introduce \textbf{FlowAWR}, an online flow RL framework that executes exact policy updates without SDE simulation. Our derivation validates the heuristic correction direction of DiffusionNFT while demonstrating that it constitutes a \textbf{binary quantized special case}.
\item We characterize the \textbf{optimal velocity field} by propagating the closed-form optimal policy to intermediate optimal distributions along the probability path, ultimately instantiating this theoretical optimum via a \textbf{group-based approximated advantage function}.
\item We conduct quantitative evaluations across both single- and multi-objective optimization settings, indicating that FlowAWR accelerates convergence rates and improves empirical alignment performance compared to baseline methodologies.
\end{itemize}

\section{Background}

\subsection{Flow Matching and Rectified Flow}
\label{subsec:bg_fm}
Given a prior noise distribution $p_0$ and a target data distribution $p_1$, Flow Matching~\citep{lipman2022flow} constructs a continuous probability path $p_t(x)$ for $t \in [0, 1]$ generated by a marginal vector field $u_t(x)$. Flow Matching constructs the target field via conditional probability paths $p_t(x|x_1)$ and corresponding conditional vector fields $u_t(x|x_1)$. Specifically, adopting the Rectified Flow~\citep{liu2023flow} formulation, the forward process defines a linear interpolation between noise $x_0 \sim p_0$ and data $x_1 \sim p_1$:
\begin{equation*}
    x_t = t x_1 + (1 - t) x_0,
\end{equation*}
which induces a constant conditional vector field, $u_t(x_t|x_1) = x_1 - x_0 = \frac{x_1 - x_t}{1 - t}$.

The parameterized network $v_\theta(x_t, t)$ is then trained to regress this conditional vector field via the standard flow matching objective:
\begin{equation}
    \mathcal{L}_{\text{FM}}(\theta) = \mathbb{E}_{t \sim \mathcal{U}[0,1], x_1 \sim p_1, x_0 \sim p_0} \left[ \| v_\theta(x_t, t) - u_t(x_t|x_1) \|^2 \right].
    \label{eq:fm_loss}
\end{equation}
The global minimum of this objective yields the optimal marginal velocity field, which is explicitly formulated as the expectation of the conditional fields over the posterior data distribution~\citep{lipman2022flow, albergo2025stochastic}:
\begin{equation}
    v(x_t, t) = \mathbb{E}_{x_1 \sim p(x_1 | x_t)}\left[ \frac{x_1 - x_t}{1-t} \bigg| x_t \right] = \frac{\hat{x}_1(x_t) - x_t}{1-t},
    \label{eq:posterior_mean}
\end{equation}
where $\hat{x}_1(x_t)$ denotes the posterior mean estimation of the data $x_1$ given the intermediate state $x_t$.

\begin{comment}
    
\subsection{Policy Gradient Algorithms for Flow Matching}
\label{subsec:bg_pg}

Conventional RL algorithms like PPO~\citep{schulman2017proximal} or GRPO~\citep{shao2024deepseekmath} rely on stochastic policies to estimate gradients. However, Flow Matching~\citep{lipman2022flow} generates samples via a deterministic ODE ($\dd x_t = v_\theta(x_t, t) \,\dd t$)~\citep{chen2018neural}, where the absence of randomness hinders the direct application of trajectory-level RL. Recent works~\citep{black2023training, fan2023dpok, liu2025flow, xue2025dancegrpo} formulate the generative sampling process as a multi-step MDP. 

Specifically, FlowGRPO~\citep{liu2025flow, xue2025dancegrpo} overcomes this constraint by reformulating the continuous flow via SDEs~\citep{song2019generative, song2020scorebased}. It augments the deterministic velocity field $v_\theta$ with a time-dependent diffusion term $g_t$:
\begin{equation*}
    \dd x_t = \left[ v_\theta(x_t, t) + \frac{g_t^2}{2t} \left( x_t + (1 - t)v_\theta(x_t, t) \right) \right] \dd t + g_t \,\dd w_t.
\end{equation*}
This formulation induces a tractable Gaussian transition kernel between discrete timesteps, which allows for the computation of trajectory likelihoods and enables the direct application of policy gradient algorithms such as GRPO.
\end{comment}

\subsection{Reinforcement Learning as Supervised Fine-tuning}
\label{subsec:bg_nft}

A distinct paradigm seeks to bypass the complexities of policy gradients by treating RL as a supervised regression task. DiffusionNFT~\citep{zheng2025diffusionnft} grounds this in the decomposition of the reference policy $\pi^\old$ into positive ($\pi^+$) and negative ($\pi^-$) conditional distributions based on reward feedback. It theoretically derives a ``Reinforcement Guidance'' direction $\Delta(x_t)$, which represents the optimal shift from $\pi^\old$ towards $\pi^+$ (and away from $\pi^-$).

Instead of explicitly learning the guidance term, it employs an implicit strategy that constructs implicit positive ($v_{\theta}^+$) and negative ($v_{\theta}^-$) policies by linearly mixing the parameterized field $v_\theta$ with the reference field $v^\old$:
\begin{equation}
\label{eq:nft_coupled}
\begin{split}
    v_{\theta}^+(x_t, t) &:= (1 - \beta)v^\old(x_t, t) + \beta v_{\theta}(x_t, t), \\
    v_{\theta}^-(x_t, t) &:= (1 + \beta)v^\old(x_t, t) - \beta v_{\theta}(x_t, t),
\end{split}
\end{equation}
where $\beta$ is a fixed hyperparameter controlling the magnitude of policy deviation from the reference field. The optimization is then cast as a weighted regression loss that constrains these implicit policies to match the data vector field $u_t$:
\begin{equation}
    \mathcal{L}_{\text{NFT}}(\theta) = \mathbb{E} \left[ r \|v_{\theta}^+(x_t, t) - u_t(x_t|x_1) \|^2 + (1 - r) \|v_{\theta}^-(x_t, t) - u_t(x_t|x_1) \|^2 \right],
    \label{eq:nft_loss}
\end{equation}
where $r \in [0, 1]$ is the normalized scalar reward associated with the terminal samples $x_1$. This formulation embeds the reinforcement signal directly into the velocity field without requiring trajectory-level likelihood computation.

\section{Method}

\begin{figure}[!t]
  \centering
  \includegraphics[width=0.99\linewidth]{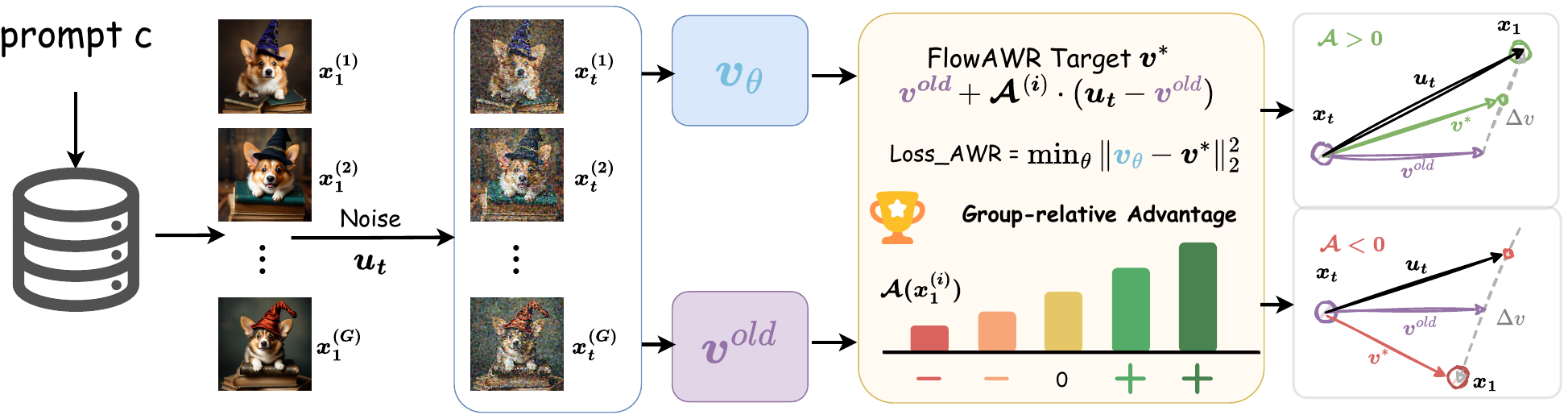}
    \vspace{-0.in}
  \caption{\textbf{Overview of FlowAWR Framework.} The parameterized velocity $v_{\theta}$ is trained to regress the optimal velocity field $v^*$. $v^*$ is constructed by rectifying the reference field $v^{\text{old}}$ toward the target velocity $u_t$ using magnitude-aware, advantage-weighted residuals.}
   \vspace{-0.1in}
  \label{fig:pipeline}
\end{figure}

\subsection{Revisiting Optimal Policy from Fixed Experience Buffer}
\label{subsec:method_prelims}

Standard policy optimization algorithms, such as PPO~\citep{schulman2015trust, schulman2017proximal} and GRPO~\citep{shao2024deepseekmath}, alternate between data sampling and policy optimization. In each iteration, samples are drawn from the current policy $\pi^\old$ to construct an experience buffer $\mathcal{D}$. 
While the iterative update of $\pi^\old$ enables online RL, 
the optimization of $\pi_\theta$ within each individual step can be formulated as an offline reward maximization problem subject to a KL divergence constraint~\citep{peters2010relative, schulman2015trust, levine2018reinforcement, peng2019advantage}:
\begin{equation}
    \max_{\pi_\theta} \mathbb{E}_{s \sim \mathcal{D}} \left[ \mathbb{E}_{a \sim \pi_\theta(\cdot|s)} R(s, a) - \gamma D_{\text{KL}}(\pi_\theta(\cdot|s) || \pi^\old(\cdot|s)) \right],
    \label{eq:constrained_objective}
\end{equation}
where $s$ denotes the state, $a$ represents the action, $R(s, a)$ is the scalar reward, and $\gamma > 0$ determines the strength of the KL penalty bounding the deviation from the reference policy.

Solving the Lagrangian of Eq.~\eqref{eq:constrained_objective} with respect to the action probability normalization constraint yields the closed-form optimal policy $\pi^*$, which takes the form of an exponentially re-weighted distribution~\citep{peters2010relative, rafailov2023direct}:
\begin{equation}
    \pi^*(a|s) = \frac{1}{Z(s)} \pi^\old(a|s) \exp\left(\frac{R(s, a)}{\gamma}\right),
    \label{eq:optimal_policy_sol}
\end{equation}
where $Z(s)$ is the partition function. This exponentially re-weighted distribution mathematically indicates that the optimal strategy shifts probability mass toward high-reward actions while being regularized by the prior $\pi^\old$~\citep{haarnoja2017reinforcement, haarnoja2018soft}.

In the context of generative Flow Matching, the closed-form solution $\pi^*(a|s)$ defines the theoretically optimal terminal distribution by mapping the MDP state $s$ to the conditioning context $c$ (e.g., input prompt) and the action $a$ to the generated sample.
Rather than approximating policy updates via SDE samplers, Flow Matching allows us to directly regress the optimal velocity field that induces this target distribution.

\subsection{Derivation of Optimal Velocity Field}

Building upon the optimal policy derived in Eq.~\eqref{eq:optimal_policy_sol}, we first identify the optimal terminal distribution $p_1^*(x_1 | c)$, which represents the theoretical optimum for reward maximization on the current experience buffer, regardless of the exploration mechanism (e.g., SDE-based sampling with policy optimization):
\begin{equation*}
    p^*(x_1 | c) = \frac{1}{Z(c)} p^\old(x_1 | c) \exp\left(\frac{R(c, x_1)}{\gamma}\right),
\end{equation*}
where $c$ denotes an input prompt, and $p^\old(x_1 | c)$ represents the reference policy. To derive the corresponding velocity field, the explicit form of the intermediate marginal distribution $p^*(x_t | c)$ is required:

\begin{proposition}[\textbf{Intermediate Marginals}]
\label{prop:optimal_posterior}
Given the optimal terminal distribution $p_1^*$, the corresponding marginal density at time $t$ relates to the reference density $p_t^\old$ via a value function $\phi(x_t, c, t)$:
\begin{equation}
    p_t^*(x_t | c) = \frac{1}{Z(c)} p_t^\old(x_t | c) \cdot \phi(x_t, c, t),
    \label{eq:p_t^*}
\end{equation}
where $\phi(x_t, c, t) = \mathbb{E}_{x_1 \sim p^\old(\cdot | x_t, c)} \left[ \exp\left(\frac{R(c, x_1)}{\gamma}\right) \right]$.
\end{proposition}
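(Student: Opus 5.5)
The argument rests on one structural fact: in the rectified-flow construction of Section~\ref{subsec:bg_fm}, the conditional path $p_t(x_t|x_1)$ is fixed by the interpolation $x_t = t x_1 + (1-t)x_0$ with $x_0\sim p_0$. It therefore does not depend on which terminal distribution we start from. So I will define the optimal path by keeping the same forward kernel and replacing only the terminal law. That is, I set $p_t^*(x_t|c) := \int p_t(x_t|x_1)\, p^*(x_1|c)\,\dd x_1$, and likewise $p_t^\old(x_t|c) = \int p_t(x_t|x_1)\, p^\old(x_1|c)\,\dd x_1$ with the same kernel. I will state explicitly that this is the path whose marginal velocity, via Eq.~\eqref{eq:posterior_mean}, is regressed by flow matching on samples from $p^*$. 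The claim then reduces to a Bayes-rule computation.

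First, substitute the closed form of $p^*(x_1|c)$ into the integral. This gives $p_t^*(x_t|c) = \frac{1}{Z(c)}\int p_t(x_t|x_1)\, p^\old(x_1|c)\exp(R(c,x_1)/\gamma)\,\dd x_1$. Next, rewrite the joint reference density $p_t(x_t|x_1)\,p^\old(x_1|c)$ as $p_t^\old(x_t|c)\,p^\old(x_1|x_t,c)$. This is Bayes' rule for the reference joint over $(x_1,x_t)$, and it holds wherever $p_t^\old(x_t|c)>0$. For $t<1$ the Gaussian smoothing by $x_0$ makes the density positive everywhere when $p_0$ is Gaussian. Pulling $p_t^\old(x_t|c)$ out of the integral leaves $\int p^\old(x_1|x_t,c)\exp(R/\gamma)\,\dd x_1 = \phi(x_t,c,t)$, which is Eq.~\eqref{eq:p_t^*}.

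As a sanity check, integrating both sides over $x_t$ and using the tower property gives $\mathbb{E}_{p_t^\old}[\phi] = \mathbb{E}_{p^\old(x_1|c)}[\exp(R/\gamma)] = Z(c)$. So the same partition function normalizes every intermediate marginal, which confirms that the constant in Eq.~\eqref{eq:p_t^*} is the terminal $Z(c)$. At $t=1$ the posterior collapses to a point mass, $\phi = \exp(R(c,x_1)/\gamma)$, and the formula recovers $p^*$.

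The step that needs care is the modeling choice in the first paragraph, not the algebra. The proposition is true only because the optimal process shares the reference's noising kernel: both paths push noise toward their terminal laws through the same interpolation, so only the terminal marginal is reweighted. I would make this assumption explicit and justify it by noting that the flow-matching target is built from $p_t(x_t|x_1)$ alone, independent of the data distribution. Beyond that, I would add only a short integrability remark. If $\exp(R/\gamma)$ is integrable under $p^\old$, for instance when $R$ is bounded as with normalized rewards, then $Z(c)<\infty$, $\phi$ is finite almost everywhere, and Fubini justifies the exchange of integrals.
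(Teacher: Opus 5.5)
Your proposal is correct and follows essentially the same route as the paper's proof: marginalize $p^*(x_1|c)$ through the policy-independent rectified-flow kernel $p(x_t|x_1)$, factor the reference joint by Bayes' rule as $p_t^\old(x_t|c)\,p^\old(x_1|x_t,c)$, and pull $p_t^\old(x_t|c)$ out of the integral so that what remains is $\phi$. Your extra normalization check ($\mathbb{E}_{p_t^\old}[\phi]=Z(c)$) and your explicit remarks on the shared kernel, positivity for $t<1$, and integrability are sound, and they spell out assumptions the paper leaves implicit.
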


With the density relationship established, the explicit form of the optimal velocity field naturally emerges. Combining the velocity form in Eq.~\eqref{eq:posterior_mean} with Tweedie's formula ~\citep{robbins1992empirical, efron2011tweedie, song2020scorebased}, which links the posterior mean to the marginal score via $\hat{x}_1(x_t, c) = \frac{x_t}{t} + \frac{(1-t)^2}{t} \nabla_{x_t} \log p_t(x_t | c)$, we observe that the velocity is strictly determined by the score function of $p_t^*(x_t | c)$ in Eq.~\eqref{eq:p_t^*}:
\begin{align}
v^*(x_t, c, t) &= \frac{x_t}{t} + \frac{1-t}{t} \nabla_{x_t} \log p^*_t(x_t | c) \nonumber \\
            &= \frac{x_t}{t} + \frac{1-t}{t} \nabla_{x_t} \left[ \log p^\old_t (x_t | c) + \log \phi(x_t, c, t) \right] \nonumber \\
            &= v^\old(x_t, c, t) + \frac{1-t}{t} \nabla_{x_t} \log \phi(x_t, c, t).
\label{eq:value_gradient_velocity}
\end{align}
While Eq.~\eqref{eq:value_gradient_velocity} provides a theoretically rigorous direction for rectification, directly computing $\nabla_{x_t} \log \phi(x_t, c, t)$ is computationally intractable. Herein, we expand the gradient into an expectation over samples, which reveals a connection to advantage-weighted regression~\citep{peng2019advantage, zhang2025energy}.

\begin{theorem}[\textbf{Advantage-Weighted Rectification}]
\label{thm:advantage_rectification}
The optimal velocity field $v^*(x_t, c)$ derived in Eq.~\eqref{eq:value_gradient_velocity} can be equivalently expressed as the reference field rectified by an advantage-weighted residual expectation:
\begin{equation*}
    v^*(x_t, c, t) = v^\old(x_t, c, t) + \mathbb{E}_{x_1 \sim p^\old(\cdot | x_t, c)} \left[ \mathcal{A}(x_1, x_t) \left( u_t(x_t|x_1) - v^\old(x_t, c, t) \right) \right],
    \label{eq:theorem_final}
\end{equation*}
where $\mathcal{A}(x_1, x_t)$ is the Centered Advantage, constructed by leveraging the zero-expectation property $\mathbb{E}[u_t - v^\old] = 0$ to subtract a constant baseline of $1$:
\begin{equation}
    \mathcal{A}(x_1, x_t) = \frac{\exp\left(\frac{R(c, x_1)}{\gamma}\right)}{\phi(x_t, c, t)} - 1.
    \label{eq:advantage_base}
\end{equation}
\end{theorem}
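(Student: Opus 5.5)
Starting from Eq.~\eqref{eq:value_gradient_velocity}, it suffices to show
\[
\frac{1-t}{t}\nabla_{x_t}\log\phi(x_t,c,t)=\mathbb{E}_{x_1\sim p^\old(\cdot|x_t,c)}\!\left[\frac{e^{R(c,x_1)/\gamma}}{\phi(x_t,c,t)}\bigl(u_t(x_t|x_1)-v^\old(x_t,c,t)\bigr)\right].
\]
The centered form of the theorem then follows by subtracting $\mathbb{E}_{x_1\sim p^\old(\cdot|x_t,c)}[u_t(x_t|x_1)-v^\old]$. This term vanishes because, by Eq.~\eqref{eq:posterior_mean} applied to the reference model, $v^\old(x_t,c,t)=\mathbb{E}_{x_1\sim p^\old(\cdot|x_t,c)}[u_t(x_t|x_1)]$. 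So the whole plan reduces to computing $\nabla_{x_t}\phi$.

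\textbf{Writing $\phi$ in differentiable form.} Apply Bayes to the posterior,
\[
p^\old(x_1|x_t,c)=\frac{p_t(x_t|x_1)\,p^\old(x_1|c)}{p^\old_t(x_t|c)}.
\]
Here $p_t(x_t|x_1)=\mathcal{N}(x_t;\,t x_1,(1-t)^2 I)$ comes from the rectified-flow interpolation with Gaussian $p_0$. Substituting,
\[
\phi=\frac{1}{p_t^\old(x_t|c)}\int e^{R(c,x_1)/\gamma}\,p_t(x_t|x_1)\,p^\old(x_1|c)\,dx_1,
\]
and therefore
\[
\nabla\log\phi=\nabla\log\!\int e^{R/\gamma}p_t(x_t|x_1)p^\old(x_1|c)\,dx_1-\nabla\log p_t^\old(x_t|c).
\]

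\textbf{Differentiating.} Differentiate under the integral, which assumes enough regularity and integrability of $e^{R/\gamma}$. This gives
\[
\nabla_{x_t}p_t(x_t|x_1)=p_t(x_t|x_1)\,\nabla_{x_t}\log p_t(x_t|x_1),
\qquad
\nabla_{x_t}\log p_t(x_t|x_1)=-\frac{x_t-t x_1}{(1-t)^2}.
\]
The first term of $\nabla\log\phi$ then becomes an expectation under the reweighted posterior $\frac{e^{R/\gamma}}{\phi}\,p^\old(x_1|x_t,c)$ of this conditional score. For the second term, the standard identity for the marginal score gives $\nabla\log p_t^\old=\mathbb{E}_{p^\old(x_1|x_t,c)}[\nabla\log p_t(x_t|x_1)]$. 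Hence
\[
\nabla\log\phi=\mathbb{E}_{p^\old(\cdot|x_t,c)}\!\left[\frac{e^{R/\gamma}}{\phi}\,\nabla\log p_t(x_t|x_1)\right]-\mathbb{E}_{p^\old(\cdot|x_t,c)}\bigl[\nabla\log p_t(x_t|x_1)\bigr].
\]
Since $\mathbb{E}_{p^\old(\cdot|x_t,c)}[e^{R/\gamma}/\phi]=1$ by the definition of $\phi$, this is a covariance-type expression. We may therefore replace each conditional score by its deviation from $\nabla\log p_t^\old$.

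\textbf{Converting scores to velocities.} The main obstacle is careful bookkeeping: converting conditional scores to velocities so that the factor $\frac{1-t}{t}$ cancels exactly. By the conditional version of Tweedie,
\[
u_t(x_t|x_1)=\frac{x_1-x_t}{1-t}=\frac{x_t}{t}+\frac{1-t}{t}\nabla\log p_t(x_t|x_1).
\]
Indeed, $\frac{x_t}{t}-\frac{x_t-t x_1}{t(1-t)}$ simplifies to $\frac{x_1-x_t}{1-t}$, so the identity holds. Likewise, the first line of Eq.~\eqref{eq:value_gradient_velocity} applied to $p^\old_t$ gives
\[
v^\old=\frac{x_t}{t}+\frac{1-t}{t}\nabla\log p_t^\old.
\]
Subtracting, $\frac{1-t}{t}\bigl(\nabla\log p_t(x_t|x_1)-\nabla\log p^\old_t\bigr)=u_t-v^\old$. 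Multiplying the covariance expression by $\frac{1-t}{t}$ therefore yields exactly $\mathbb{E}\bigl[\frac{e^{R/\gamma}}{\phi}(u_t-v^\old)\bigr]$.

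Finally, subtract the zero-mean baseline as described at the start to obtain the stated form with $\mathcal{A}=e^{R/\gamma}/\phi-1$. I will also double-check that the normalizer $Z(c)$ is independent of $x_t$, which is why it drops out of all gradients.
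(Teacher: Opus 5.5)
Your proposal is correct and follows essentially the same route as the paper's proof: differentiate $\log\phi$ under the integral, use Bayes to split the posterior score into the Gaussian conditional score minus the reference marginal score, convert both via Tweedie so that the factor $\frac{1-t}{t}$ cancels against $\frac{t}{1-t}(u_t - v^{\old})$, and then subtract the zero-mean residual to center the weight. The only difference is cosmetic: you pull $p_t^{\old}$ out of $\phi$ before differentiating and recombine using $\mathbb{E}[e^{R/\gamma}/\phi]=1$, whereas the paper applies the log-derivative trick directly to the posterior $p^{\old}(x_1|x_t,c)$ inside the integral.
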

This formulation reveals that the optimal field $v^*$ is obtained by dynamically rectifying the reference field $v^\old$ with the velocity residual $(u_t - v^\old)$ based on $\mathcal{A}$.
This exhibits an explicit awareness of sample quality, enabling the formulation to quantitatively differentiate between positive and negative samples through the polarity of $\mathcal{A}$, thereby reinforcing or repelling trajectories accordingly.

\subsection{Flow Reinforcement via Advantage-Weighted Rectification}
\label{subsec:flow_reinforcement}

Based on Theorem~\ref{thm:advantage_rectification}, the policy optimization process can be cast as a supervised regression task towards the optimal velocity field. Ideally, the formulation is the minimization of the distance between $v_\theta$ and the theoretical optimum $v^*$, as defined in the standard objective in Eq.~\eqref{eq:fm_loss}:
\begin{equation*}
    \mathcal{L}_{\mathrm{Ideal}}(\theta) = \mathbb{E}_{t, x_t \sim p_t^\old} \left[ \left\| v_\theta(x_t, c, t) - v^*(x_t, c, t) \right\|^2 \right].
\end{equation*}

However, directly optimizing $\mathcal{L}_{\mathrm{Ideal}}$ is computationally intractable due to the expectation over $p^\old(x_1 | x_t, c)$ embedded within $v^*(x_t, c, t)$. To circumvent this, we propose the Advantage-Weighted Rectification (AWR) Loss, which learns from individual conditional samples by constructing a stochastic rectified target:
\begin{equation}
\label{eq:awr_loss}
\begin{split}
    &\mathcal{L}_{\mathrm{AWR}}(\theta) = \mathbb{E}_{t \sim \mathcal{U}[0,1], x_1 \sim p^\old(\cdot|c), x_0 \sim p_0(\cdot)} \bigg[ \\
    &\quad \Big\| v_\theta(x_t, c, t) - \Big( v^\old(x_t, c, t) + \mathcal{A}(x_1, x_t) \cdot \big( u_t(x_t | x_1) - v^\old(x_t, c, t) \big) \Big) \Big\|^2 \bigg].
\end{split}
\end{equation}

\begin{theorem}[\textbf{Optimization Equivalence}]
\label{thm:optimization_equivalence}
The Advantage-Weighted Rectification Loss $\mathcal{L}_{\mathrm{AWR}}(\theta)$ and the Ideal Loss $\mathcal{L}_{\mathrm{Ideal}}(\theta)$ share the same global optimum. Specifically, the gradients of both objectives with respect to $\theta$ are identical:
\begin{equation*}
    \nabla_\theta \mathcal{L}_{\mathrm{AWR}}(\theta) = \nabla_\theta \mathcal{L}_{\mathrm{Ideal}}(\theta).
\end{equation*}
which indicates that minimizing $\mathcal{L}_{\mathrm{AWR}}$ drives the parameterized velocity field $v_\theta$ to converge to the theoretical optimal field $v^*(x_t, c, t)$.
\end{theorem}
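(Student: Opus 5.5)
The proof is a bias–variance decomposition of the squared loss, the same argument that equates conditional and marginal Flow Matching. Throughout, fix $t$ and $c$, and define the stochastic rectified target
\[
  \tilde v(x_1,x_t) := v^\old(x_t,c,t) + \mathcal{A}(x_1,x_t)\big(u_t(x_t|x_1) - v^\old(x_t,c,t)\big).
\]
The sampling scheme in $\mathcal{L}_{\mathrm{AWR}}$ is $x_1\sim p^\old(\cdot|c)$, $x_0\sim p_0$, with $x_t = t x_1 + (1-t)x_0$. This produces the joint law $p^\old_t(x_t|c)\,p^\old(x_1|x_t,c)$. The first step is therefore to rewrite the outer expectation as $\mathbb{E}_{t}\,\mathbb{E}_{x_t\sim p_t^\old}\,\mathbb{E}_{x_1\sim p^\old(\cdot|x_t,c)}$. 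This matches the outer measure already used in $\mathcal{L}_{\mathrm{Ideal}}$.

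Next, for fixed $x_t$, I expand
\[
  \|v_\theta - \tilde v\|^2 = \|v_\theta\|^2 - 2\langle v_\theta, \tilde v\rangle + \|\tilde v\|^2 .
\]
Two facts make the decomposition go through:
\begin{itemize}
  \item $v_\theta(x_t,c,t)$ does not depend on $x_1$, so the inner posterior expectation passes through it, giving $\mathbb{E}_{x_1|x_t}\langle v_\theta,\tilde v\rangle = \langle v_\theta, \mathbb{E}_{x_1|x_t}\tilde v\rangle$.
  \item $\mathbb{E}_{x_1|x_t}\tilde v = v^*(x_t,c,t)$. This is exactly the statement of Theorem~\ref{thm:advantage_rectification}.
\end{itemize}
Completing the square then yields
\[
  \mathcal{L}_{\mathrm{AWR}}(\theta) = \mathcal{L}_{\mathrm{Ideal}}(\theta) + \mathbb{E}_{t,x_t}\Big[\mathbb{E}_{x_1|x_t}\|\tilde v\|^2 - \|v^*\|^2\Big].
\]
The second term is a conditional variance of the target. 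It involves only $v^\old$, $\mathcal{A}$, $u_t$ and $v^*$, none of which depends on $\theta$: $v^\old$ is the frozen reference, $\phi$ is defined through $p^\old$, and $\mathcal{A}$ is built from $\phi$ and $R$. Differentiating in $\theta$ therefore gives equal gradients. Because the two losses differ by a constant, they also share the same minimizers. Over a sufficiently expressive class, the pointwise minimizer of $\mathcal{L}_{\mathrm{Ideal}}$ is $v_\theta = v^*$ on the support of $p_t^\old$.

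The main obstacle is regularity rather than algebra, and I expect it to be handled with standing assumptions rather than a delicate argument. Three points need care:
\begin{enumerate}
  \item \emph{Finite second moments.} The identity requires $\mathbb{E}\|\tilde v\|^2 < \infty$, which needs the advantage weights $\exp(R/\gamma)/\phi$ to have finite second moments. The most direct sufficient condition is bounded $R$, which makes $\mathcal{A}$ bounded.
  \item \emph{Interchanging gradient and expectation.} Passing $\nabla_\theta$ inside the expectation follows by dominated convergence, assuming $v_\theta$ is smooth in $\theta$ with integrable derivatives.
  \item \emph{Stop-gradient on the target.} $v^\old$ and $\mathcal{A}$ must be treated as stop-gradient quantities. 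This holds because both are defined from the frozen reference policy, not from $\theta$.
\end{enumerate}
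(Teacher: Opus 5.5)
Your proof is correct and is essentially the paper's argument. The paper also rewrites $\mathcal{L}_{\mathrm{AWR}}$ as an outer expectation over $x_t\sim p_t^{\old}$ with an inner posterior expectation over $x_1$, uses Theorem~\ref{thm:advantage_rectification} to identify $v^*$ as the conditional mean of the rectified target, and kills the cross term to obtain $\mathcal{L}_{\mathrm{AWR}}=\mathcal{L}_{\mathrm{Ideal}}+C$. The remaining differences are cosmetic: you complete the square from $\|v_\theta\|^2-2\langle v_\theta,\tilde v\rangle+\|\tilde v\|^2$, and your constant $\mathbb{E}\|\tilde v\|^2-\|v^*\|^2$ equals the paper's $\mathbb{E}\|v^*-Y\|^2$; your regularity caveats are extra care the paper leaves implicit.
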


Theorem~\ref{thm:optimization_equivalence} establishes a novel perspective for Flow RL: directly learning the optimal terminal distribution derived from each experience buffer. This results in a new off-policy paradigm that casts policy optimization as supervised regression on advantage-weighted samples (Figure~\ref{fig:pipeline}). 

Below, we introduce several practical techniques to translate this theoretical framework into an efficient online RL algorithm.

\paragraph{\textbf{Group-Based Estimation.}} Following GRPO-based algorithms~\citep{shao2024deepseekmath, liu2025flow, xue2025dancegrpo, zheng2025diffusionnft}, we approximate the local value function $\phi(x_t, c, t)$ using global group statistics to maintain computational efficiency. Specifically, for each prompt $c$, we generate a group of $G$ images $\{x_1^{(i)}\}_{i=1}^G$. The advantage for the $i$-th sample in Eq.~\eqref{eq:advantage_base} is empirically computed as:
\begin{equation}
\mathcal{A}(x_1^{(i)}) \approx \frac{\exp\left(\frac{R(c, x_1^{(i)})}{\gamma}\right)}{\frac{1}{G} \sum_{j=1}^G \exp\left(\frac{R(c, x_1^{(j)})}{\gamma}\right)} - 1. 
\label{eq:Softmax_Advantage}
\end{equation}
And the practical training objective for a prompt $c$ can be expressed as:
\begin{equation*}
\label{eq:group_loss}
\begin{split}
&\mathcal{L}_{\mathrm{AWR}}^{(c)}(\theta)=\frac{1}{G}\sum_{i=1}^{G}\mathbb{E}_{t\sim\mathcal{U}[0,1],x_0\sim p_0(\cdot)}\bigg[ \\
&\quad \Big\| v_\theta(x_t^{(i)},t)-\Big(v^\old(x_t^{(i)},t)+\mathcal{A}(x_1^{(i)})\cdot\big(u_t(x_t^{(i)}|x_1^{(i)})-v^\old(x_t^{(i)},t)\big)\Big)\Big\|^2 \bigg].
\end{split}
\end{equation*}

\paragraph{Adaptive $\gamma$ Scaling.} Instead of a fixed $\gamma$ in the advantage term, we adopt the standard deviation (\texttt{std}) of rewards within the current experience buffer, which aligns with practices in GRPO-based algorithms. Theoretically, this implies an adaptive adjustment of the KL constraint strength in Eq.~\eqref{eq:constrained_objective}.

\paragraph{\textbf{Soft Online Evolution.}} To ensure the robustness of FlowAWR, we employ an exponential moving average (EMA) update for the reference policy at the $i$-th iteration: $\theta^\old \leftarrow \mu_i \theta^\old + (1 - \mu_i) \theta$, which provides a steady evolving baseline for advantage estimation, mitigating the variance of immediate updates while allowing the reference policy to progressively shift towards high-reward regions~\citep{ho2020denoising}.

\paragraph{\textbf{CFG-Free Optimization.}} While CFG~\citep{ho2022classifier} is standard for inference, it becomes redundant under our AWR objective, which implicitly distills the guidance signal by rectifying the velocity field towards high-reward trajectories. We observe that the policy rapidly learns to generate high-fidelity samples without external guidance, aligning with recent findings~\citep{zheng2025diffusionnft} and significantly reducing inference cost.

\subsection{Unified Perspective on Flow Reinforcement}
\label{subsec:discussion}

Our FlowAWR framework not only provides a foundation for supervised flow policy optimization but also establishes a unifying lens through which both standard supervised fine-tuning and recent heuristic methods can be understood. As derived in Theorem~\ref{thm:advantage_rectification} and ~\ref{thm:optimization_equivalence}, our framework explicitly defines the optimal velocity field, which takes the following explicit form:
\begin{equation}
    v^*(x_t, c, t) = v^{\text{old}}(x_t, c, t) + \mathcal{A}(x_1, x_t) \cdot \left( u_t(x_t|x_1) - v^{\text{old}}(x_t, c, t) \right).
    \label{eq:optimal_velocity_explicit}
\end{equation}
By modulating the advantage distribution $\mathcal{A}(x_1, x_t)$, we can formally deduce both standard supervised fine-tuning and discrete heuristic methods.

\begin{table}[t]
  \centering
  \caption{Our evaluation results compared with DiffusionNFT. \protect\colorbox{baselinegray}{Gray-colored}: In-domain target task. $^*$ OCR Performance under Standard Setting ($\beta=1$, default EMA) } 
  \label{tab:baseline_comparison_grouped}
  \resizebox{\textwidth}{!}{%
    \setlength{\tabcolsep}{4pt}
    \begin{tabular}{lccccccccc}
      \toprule
      \multirow{2}{*}{\textbf{Method}} & \multirow{2}{*}{\textbf{\#Iter}} & \multicolumn{3}{c}{\textbf{Task Metric}} & \multicolumn{5}{c}{\textbf{DrawBench Metric}} \\
      \cmidrule(lr){3-5} \cmidrule(lr){6-10}
      & & \textbf{GenEval} & \textbf{OCR} & \textbf{PickScore} & \textbf{ClipScore} & \textbf{HPSv2.1} & \textbf{Aesthetic} & \textbf{ImgRwd} & \textbf{PickScore} \\
      \midrule
      SD3.5-M + CFG & --- & 0.63 & 0.59 & 21.75 & 0.285 & 0.279 & 5.36 & 0.85 & 22.34 \\
      \midrule
      DiffusionNFT & 1k & \cellcolor{baselinegray}\textbf{0.98} & 0.35 & 20.19 & 0.274 & 0.240 & 4.99 & 0.40 & 21.78 \\
      Ours & \textbf{0.5k} & \cellcolor{baselinegray}\textbf{0.98} & 0.37 & 20.25 & 0.274 & 0.252 & 5.09 & 0.43 & 21.92 \\
      \midrule
      DiffusionNFT & 0.52k & 0.24 & \cellcolor{baselinegray}0.89$^*$ & 19.38 & 0.240 & 0.183 & 4.90 & -0.92 & 20.27 \\
      Ours & \textbf{0.26k} & 0.29 & \cellcolor{baselinegray}\textbf{0.97} & 19.43 & 0.246 & 0.193 & 5.00 & -0.78 & 20.51 \\
      \midrule
      DiffusionNFT & 2.0k & 0.55 & 0.67 & \cellcolor{baselinegray}23.82 & 0.272 & 0.316 & 6.19 & 1.31 & 24.09 \\
      Ours & \textbf{1.2k} & 0.61 & 0.72 & \cellcolor{baselinegray}\textbf{24.12} & 0.275 & 0.315 & 6.28 & 1.36 & 24.29 \\
      \bottomrule
    \end{tabular}%
  }
\end{table}

\begin{figure}[t]
    \centering
    % 第一张子图
    \begin{subfigure}{0.32\textwidth}
        \centering
        \includegraphics[width=\linewidth]{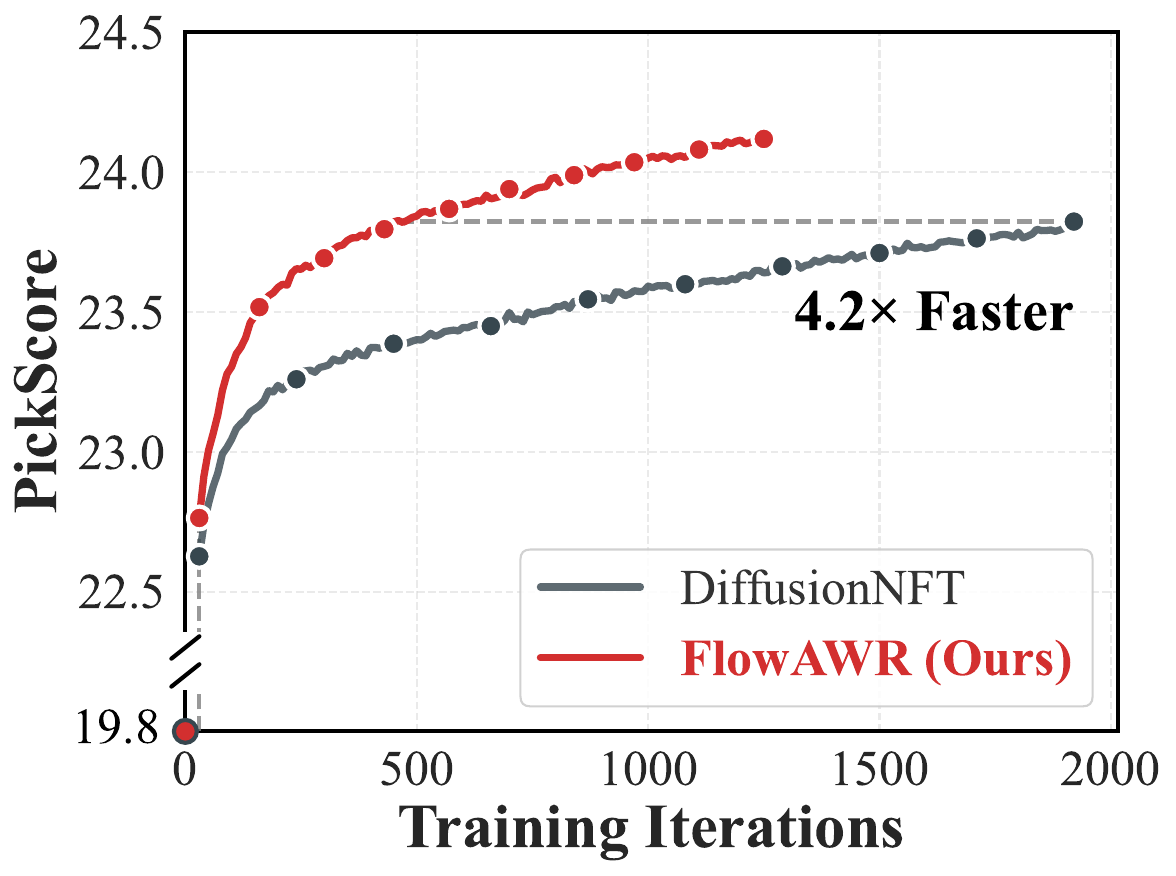}
        \caption{} 
        \label{fig:sub_ocr}
    \end{subfigure}\hfill
    % 第二张子图
    \begin{subfigure}{0.32\textwidth}
        \centering
        \includegraphics[width=\linewidth]{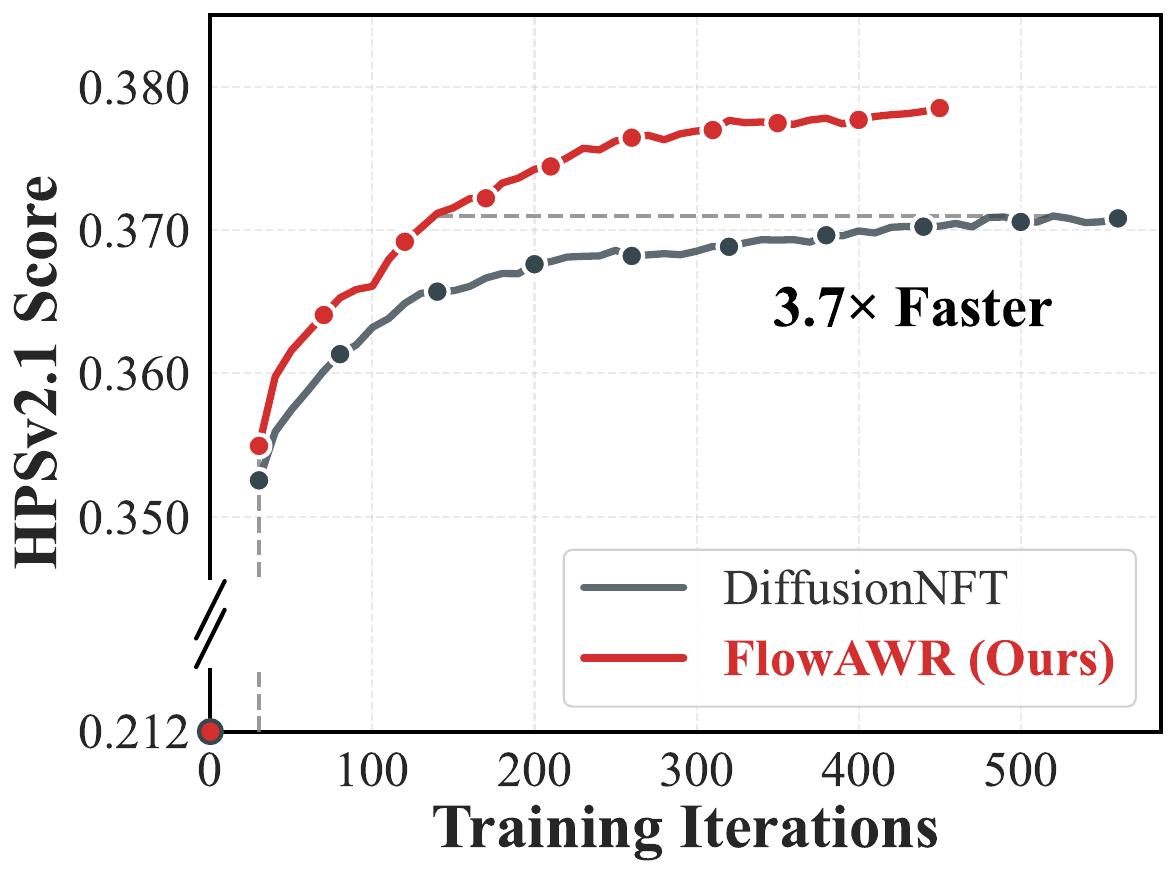}
        \caption{} % 自动生成 (b)
        \label{fig:sub_pick}
    \end{subfigure}\hfill
    % 第三张子图
    \begin{subfigure}{0.32\textwidth}
        \centering
        \includegraphics[width=\linewidth]{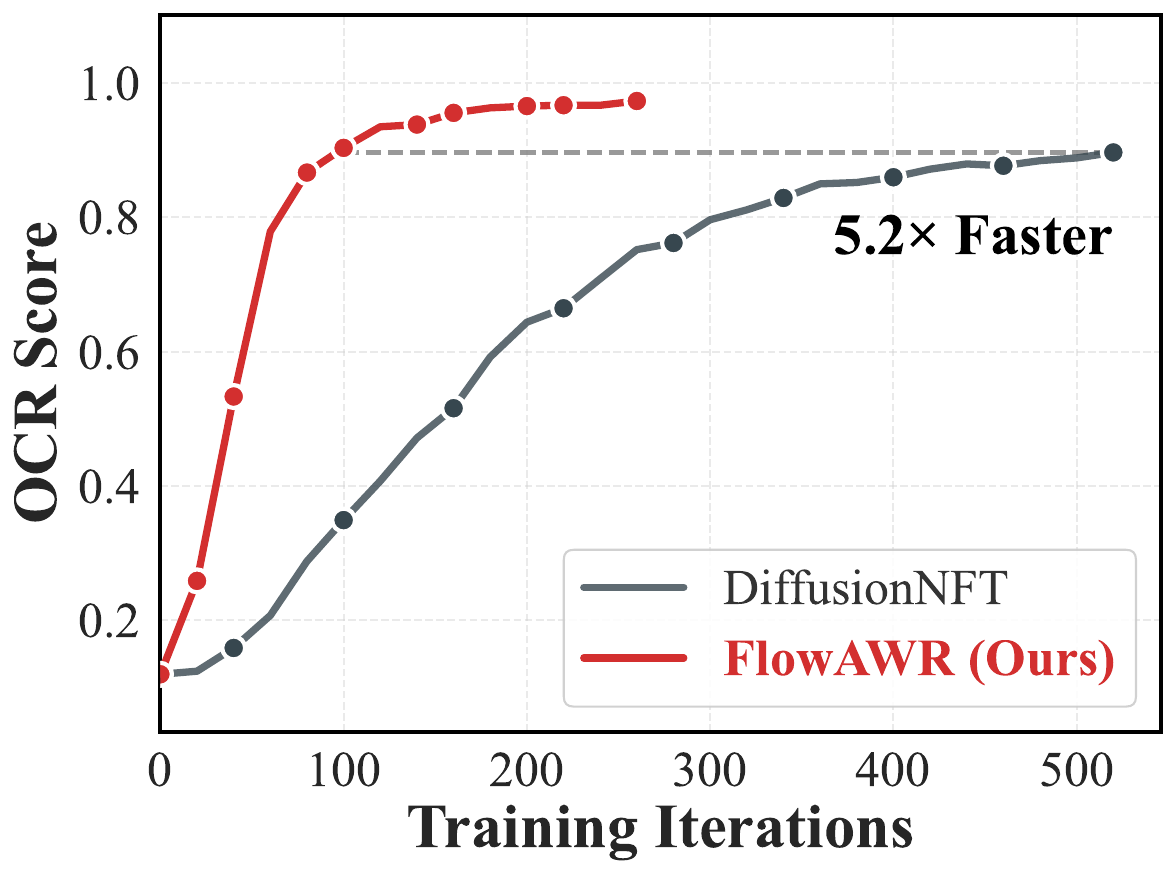}
        \caption{} % 自动生成 (c)
        \label{fig:sub_hps}
    \end{subfigure}
    
    \caption{Head-to-head comparison between FlowAWR and DiffusionNFT on single-reward optimization tasks: (a) PickScore, (b) HPSv2.1, and (c) OCR.}
    \label{fig:main_efficiency}
\end{figure}

\paragraph{Unification with Supervised Fine-Tuning.} 
Under the condition where no reward prioritization is applied, all samples generated by the reference policy are treated as equally optimal. In our framework, this corresponds to setting a constant advantage $\mathcal{A}(x_1, x_t) \equiv 1$, which arises naturally when we omit the baseline subtraction. Substituting this into Eq.~\eqref{eq:optimal_velocity_explicit}, the rectification residual perfectly cancels the reference field:
\begin{equation*}
    v^*(x_t, c, t) = v^\old(x_t, c, t) + 1 \cdot \left( u_t(x_t|x_1) - v^\old(x_t, c, t) \right) = u_t(x_t|x_1).
\end{equation*}
Consequently, the objective strictly degenerates into regressing the raw conditional vector field, which demonstrates that SFT is a trivial case of FlowAWR under a uniform advantage distribution.

\paragraph{Unification with DiffusionNFT.}
Furthermore, we identify that DiffusionNFT (Sec.~\ref{subsec:bg_nft}) can be interpreted as a \textbf{Binary Quantized Special Case} of our approach.
By factoring out the coefficient $\beta$ from the implicit policies in Eq.~\eqref{eq:nft_coupled}, \eqref{eq:nft_loss}, the optimization can be equivalently rewritten as:
\begin{equation*}
\begin{split}
    \mathcal{L}_{\text{NFT}} \propto \mathbb{E} \bigg[ r &\left\| v_\theta - \left(v^\old + \frac{1}{\beta}(u_t - v^\old)\right) \right\|^2 \\
    + (1 - r) &\left\| v_\theta - \left(v^\old - \frac{1}{\beta}(u_t - v^\old)\right) \right\|^2 \bigg].
\end{split}
\end{equation*}
where the temporal and conditional variables are omitted for brevity. This factorization reveals that DiffusionNFT analytically constrains the model to approximate our theoretical $v^*$ under fixed, hard-clipped advantages: $\mathcal{A}_{\text{NFT}} = \pm 1/\beta$.

Although DiffusionNFT incorporates group statistics to normalize the raw reward into a scalar $r \in [0, 1]$, it remains restricted to ``single-sample pushing/pulling'' between positive and negative directions, which does not effectively utilize intra-group relative information for rectification. This limitation becomes particularly evident under the extreme condition of $r = 0.5$, where both directions are redundantly updated with identical weights.

\paragraph{Advantages of FlowAWR.}
By generalizing DiffusionNFT into a probabilistic framework, FlowAWR retains its structural benefits: \textbf{Forward Consistency} (optimizing forward ODEs), \textbf{Solver Flexibility} (decoupling sampling from training), and a \textbf{Likelihood-free} formulation (bypassing SDE approximations).

Furthermore, by deriving the advantage function $\mathcal{A}$ in Theorem~\ref{thm:advantage_rectification}, FlowAWR bypasses the manual tuning of the mixing hyperparameter $\beta$. Building upon this grounded formulation, our method effectively implements \textbf{Group-Awareness}. 
While DiffusionNFT reduces group statistics to a scalar weight for isolated trajectory updates, FlowAWR leverages the \textit{Group-Based Estimation} of the partition function $\phi$ to establish a dynamic local baseline. This relative judgment, rooted in the group normalization term $\frac{1}{G}\sum \exp(R/\gamma)$, provides a more robust and statistically grounded direction for velocity rectification compared to isolated absolute feedback.

\section{Experiments}
\label{sec:experiments}

We evaluate the proposed FlowAWR framework across three dimensions: (1) head-to-head comparative analysis with DiffusionNFT on single-reward optimization, (2) multi-reward joint training performance, and (3) ablation studies isolating key algorithmic components.

\subsection{Experimental Setup}
\label{subsec:experimental_setup}

To ensure a fair comparison, our experiments are based on SD3.5-Medium~\citep{esser2024scaling} at $512 \times 512$ resolution, with most experimental settings strictly aligned with FlowGRPO~\citep{liu2025flow} and DiffusionNFT~\citep{zheng2025diffusionnft}.

\textbf{Tasks and Evaluation Metrics.} We categorize the text-to-image optimization tasks into two paradigms: (1) \textit{Rule-based constraints}, utilizing GenEval~\citep{ghosh2023geneval} for compositional spatial reasoning and OCR accuracy for explicit text rendering. (2) \textit{Model-based human preference}, employing PickScore~\citep{kirstain2023pick} as the primary reward signal. To assess out-of-domain generalization, models are additionally evaluated on the DrawBench~\citep{saharia2022photorealistic} benchmark using an array of evaluation metrics: PickScore, CLIPScore~\citep{hessel2021clipscore}, HPSv2.1~\citep{wu2023human}, Aesthetic Score~\citep{schuhmann2022laion}, and ImageReward~\citep{xu2023imagereward}. 

\textbf{Training and Evaluation.} We finetune with LoRA ($\alpha = 64, r = 32$). During training, each epoch processes 48 prompts, with an intra-group sample size of $G = 24$. For trajectory rollouts, we use $T = 10$ sampling steps for both head-to-head baseline comparisons, ablation studies and multi-reward joint training. 
All final evaluations are performed using a 40-step first-order ODE solver. Further implementation details and hyperparameter settings are provided in the supplementary material.

\subsection{Head-to-Head Comparison on Single Rewards}
\label{subsec:head_to_head}
\begin{figure}[t!]
  \centering
  % ==================== 第一部分：表格 ====================
  \makeatletter\def\@captype{table}\makeatother 
  
  \caption{Evaluations for the multi-reward setting. \colorbox{black!15}{Gray-colored}: In-domain reward. $^\ddagger$Evaluated under $1024 \times 1024$ resolution. \textbf{Bold}: best result. \underline{Underline}: second best.}
  \label{tab:multi_reward}
  \resizebox{\textwidth}{!}{%
    \setlength{\tabcolsep}{4pt}
    \begin{tabular}{lccccccccc}
      \toprule
      \multirow{2}{*}{\textbf{Model}} & \multirow{2}{*}{\textbf{\#Iter}} & \multicolumn{3}{c}{\textbf{Task Metric}} & \multicolumn{5}{c}{\textbf{DrawBench Metric}} \\
      \cmidrule(lr){3-5} \cmidrule(lr){6-10}
      & & \textbf{GenEval} & \textbf{OCR} & \textbf{PickScore} & \textbf{ClipScore} & \textbf{HPSv2.1} & \textbf{Aesthetic} & \textbf{ImgRwd} & \textbf{PickScore} \\
      \midrule
      
      SD-XL$^{\ddagger}$ & --- & 0.55 & 0.14 & 21.74 & 0.287 & 0.280 & 5.60 & 0.76 & 22.42 \\
      SD3.5-L$^{\ddagger}$ & --- & 0.71 & 0.68 & 22.24 & 0.289 & 0.288 & 5.50 & 0.96 & 22.91 \\
      FLUX.1-Dev & --- & 0.66 & 0.59 & 22.55 & \underline{0.295} & 0.274 & 5.71 & 0.96 & 22.84 \\
      \midrule
      
      SD3.5-M (w/o CFG) & --- & 0.24 & 0.12 & 19.90 & 0.237 & 0.204 & 5.13 & -0.58 & 20.51 \\
      + CFG & --- & 0.63 & 0.59 & 21.75 & 0.285 & 0.279 & 5.36 & 0.85 & 22.34 \\
      \quad + FlowGRPO & $>$5k & \cellcolor{baselinegray}\textbf{0.95} & 0.66 & 21.67 & 0.293 & 0.274 & 5.32 & 1.06 & 22.37 \\
      & 2k & 0.66 & \cellcolor{baselinegray}\underline{0.92} & 21.80 & 0.290 & 0.280 & 5.32 & 0.95 & 22.44 \\
      & 4k & 0.54 & 0.68 & \cellcolor{baselinegray}23.33 & 0.280 & 0.316 & 5.90 & 1.29 & 23.50 \\
      \midrule
      
      + DiffusionNFT & 1.7k & \cellcolor{baselinegray}0.93 & \cellcolor{baselinegray}0.90 & \cellcolor{baselinegray}\underline{23.36} & \cellcolor{baselinegray}0.293 & \cellcolor{baselinegray}\underline{0.330} & \underline{5.99} & \underline{1.48} & \underline{23.79} \\
      + Ours                     & 1.5k & \cellcolor{baselinegray}\underline{0.93} & \cellcolor{baselinegray}\underline{0.91} & \cellcolor{baselinegray}\textbf{23.60} & \cellcolor{baselinegray}\textbf{0.296} & \cellcolor{baselinegray}\textbf{0.341} & \textbf{6.04} & \textbf{1.51} & \textbf{23.91} \\
      \bottomrule
    \end{tabular}%
  }
  \vspace{0.1cm} 
  % ==================== 第二部分：图片 ====================
  \makeatletter\def\@captype{figure}\makeatother 
  
  \includegraphics[width=\textwidth]{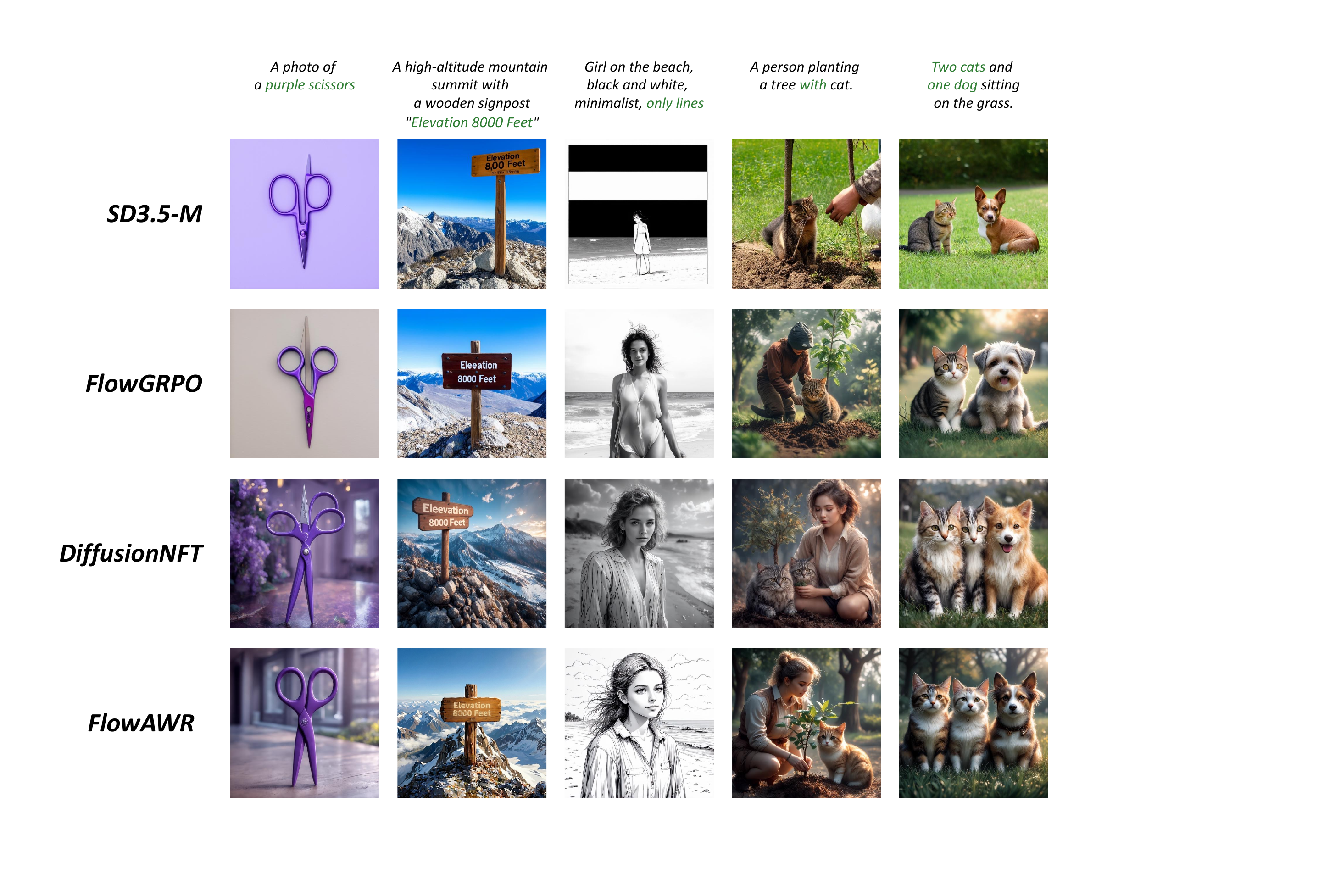} 
  \caption{Qualitative Comparison. The text prompts are sampled from GenEval, OCR, PickScore, and DrawBench evaluations, illustrating the generation fidelity of our FlowAWR.}
  \label{fig:qualitative_comp}

\end{figure}

We first compare FlowAWR with DiffusionNFT under the single-reward setting. As shown in Table~\ref{tab:baseline_comparison_grouped} and Figure~\ref{fig:main_efficiency}, FlowAWR achieves a $2\times$ to $5\times$ acceleration in training iterations. For the OCR task, the Figure~\ref{fig:sub_hps} illustrates the baseline performance of DiffusionNFT without its extreme setting of $\beta = 0.1$ (which corresponds to an aggressive hard-clipped advantage of $\mathcal{A}_{\text{NFT}} = \pm 10$), further indicating that its efficacy depends heavily on heuristic tuning. 

As evidenced in Table~\ref{tab:baseline_comparison_grouped}, FlowAWR exhibits reduced reward hacking during out-of-domain evaluations. However, exclusively optimizing a single rule-based objective (e.g., GenEval or OCR) typically degrades overall image aesthetics due to the absence of holistic visual constraints. To address this, we explore multi-reward sequential training in the subsequent section.

\subsection{Multi-Reward Sequential Training}

To mitigate the aesthetic degradation associated with isolated rule-based objectives, we extend FlowAWR to a multi-reward setting via a sequential optimization strategy. Using the CFG-free SD3.5-Medium (2.5B parameters) as the base model, we first optimize the policy using a composite reward signal comprising PickScore, CLIPScore, and HPSv2.1 on the Pick-a-Pic dataset~\citep{kirstain2023pick}. 
Building upon this aesthetically aligned prior, we subsequently branch the training process to derive task-specific expert policies for GenEval and OCR. This decoupled approach enables improved adherence to structural rules while preserving the global visual fidelity established in the first stage as exhibited in Table~\ref{tab:multi_reward} and Figure~\ref{fig:qualitative_comp}.
\begin{figure}[t]
    \centering
    % ==================== 第一排 ====================
    % 左侧：Figure 7 (占总宽度的 64%)
    \begin{minipage}[b]{0.64\textwidth}
        \begin{subfigure}{0.48\linewidth}
            \centering
            \includegraphics[width=\linewidth]{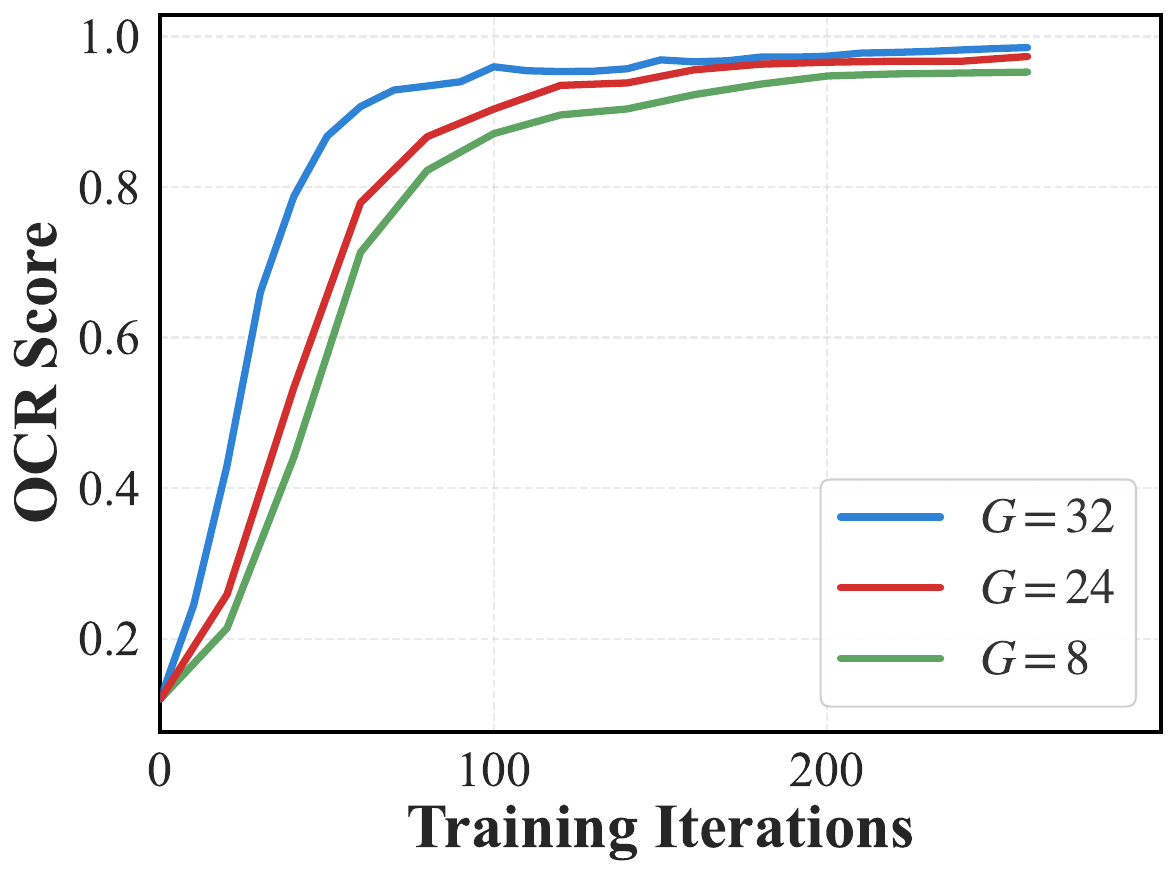} % 替换为你的图 a 路径
            \caption{} 
        \end{subfigure}\hfill
        \begin{subfigure}{0.48\linewidth}
            \centering
            \includegraphics[width=\linewidth]{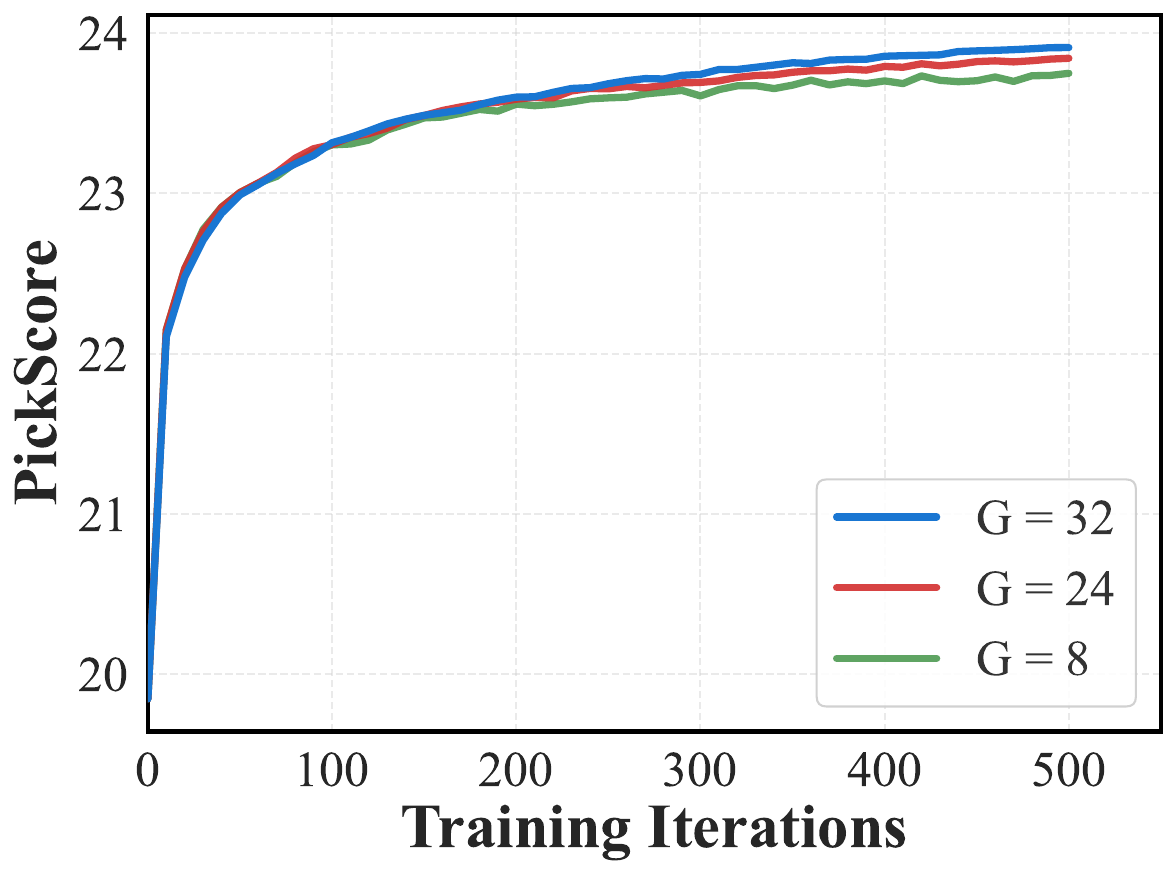} % 替换为你的图 b 路径
            \caption{} 
        \end{subfigure}
        \caption{Ablation on group size $G$.}
        \label{fig:group_size}
    \end{minipage}\hfill % 弹开左右两边的间距
    % 右侧：Figure 8 (占总宽度的 32%)
    \begin{minipage}[b]{0.32\textwidth}
        \centering
        \includegraphics[width=\linewidth]{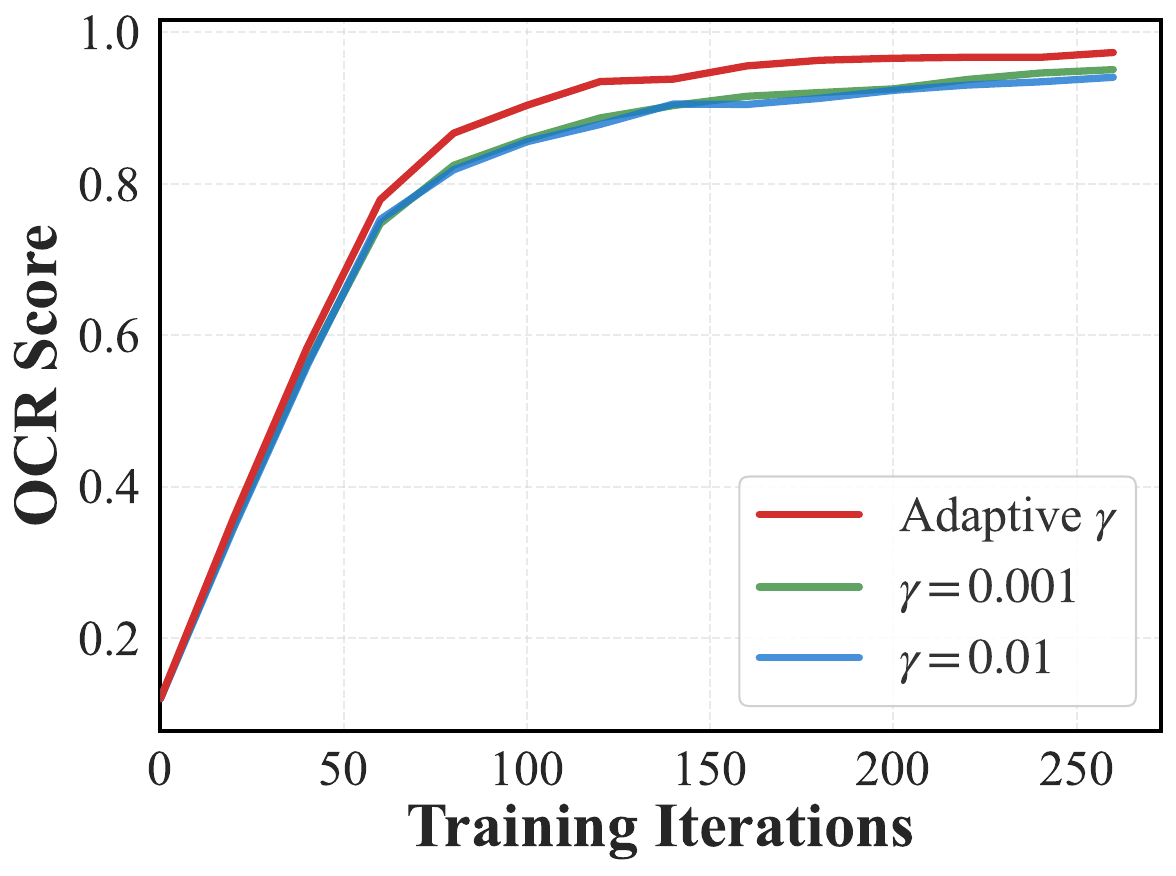} % 替换为你的图路径
        \caption{Effect of adaptive or fixed $\gamma$.}
        \label{fig:adaptive_gamma}
    \end{minipage}
    \vspace{0.cm} % ================= 两排之间加一点垂直间距 =================
    
    % ==================== 第二排 ====================
    % 左侧：Figure 9 (占总宽度的 64%)
    \begin{minipage}[b]{0.64\textwidth}
        \begin{subfigure}{0.48\linewidth}
            \centering
            \includegraphics[width=\linewidth]{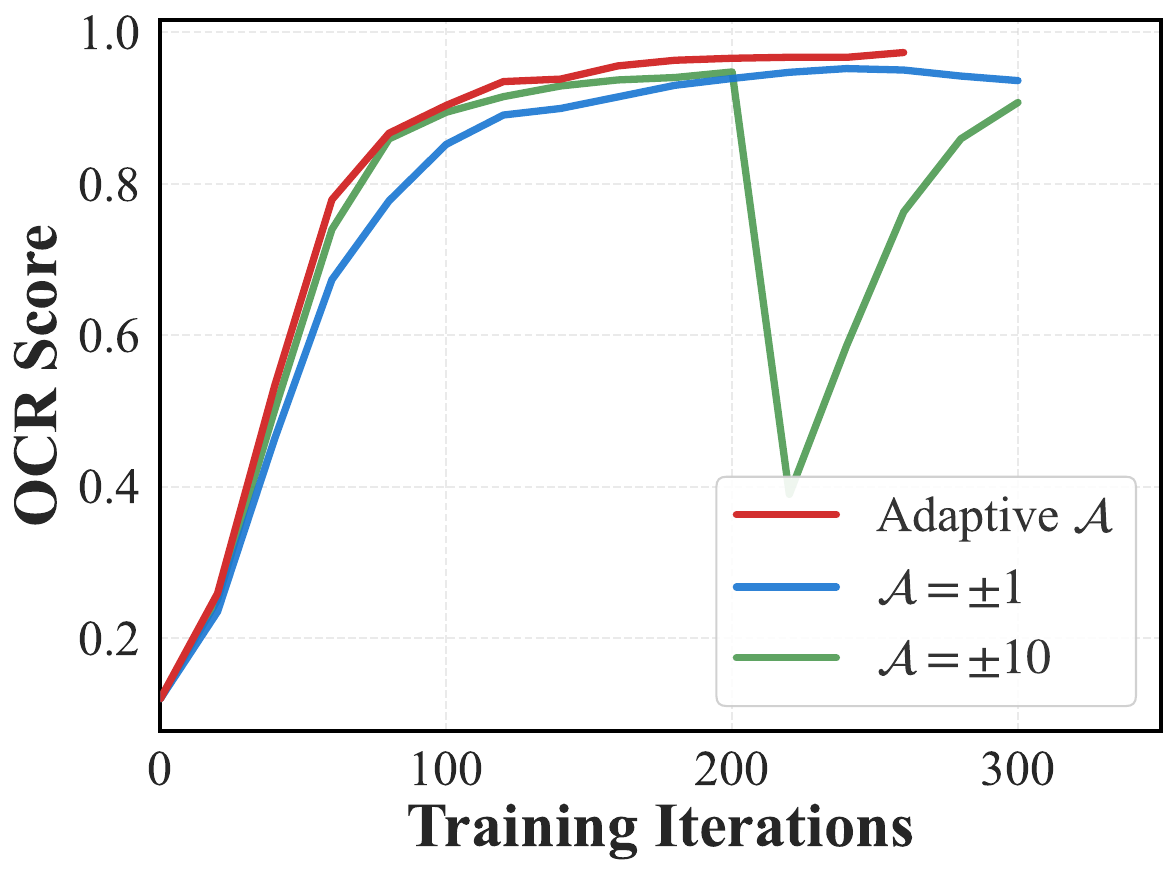} % 替换为你的图 a 路径
            \caption{} 
        \end{subfigure}\hfill
        \begin{subfigure}{0.48\linewidth}
            \centering
            \includegraphics[width=\linewidth]{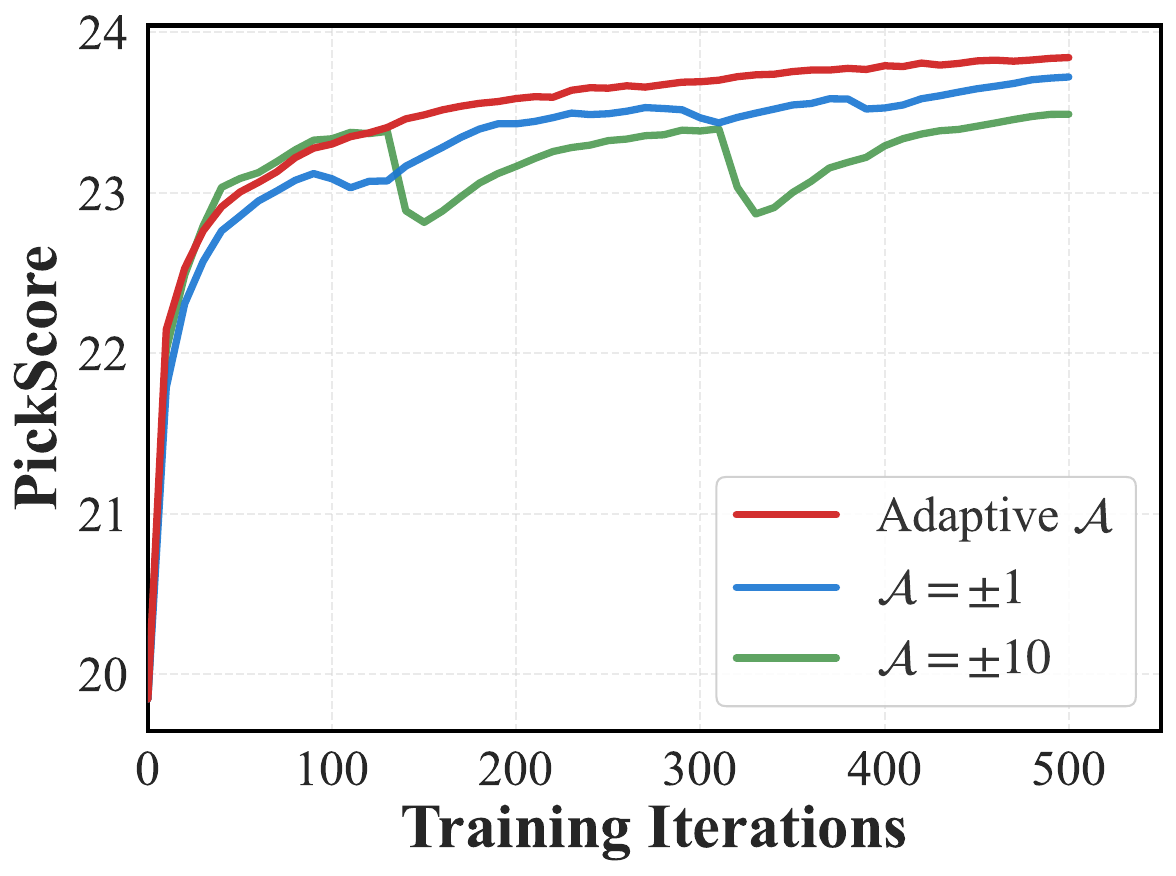} % 替换为你的图 b 路径
            \caption{} 
        \end{subfigure}
        \caption{Ablation on adaptive advantage.}
        \label{fig:adaptive_advantage}
    \end{minipage}\hfill % 弹开左右两边的间距
    % 右侧：Figure 10 (占总宽度的 32%)
    \begin{minipage}[b]{0.32\textwidth}
        \centering
        \includegraphics[width=\linewidth]{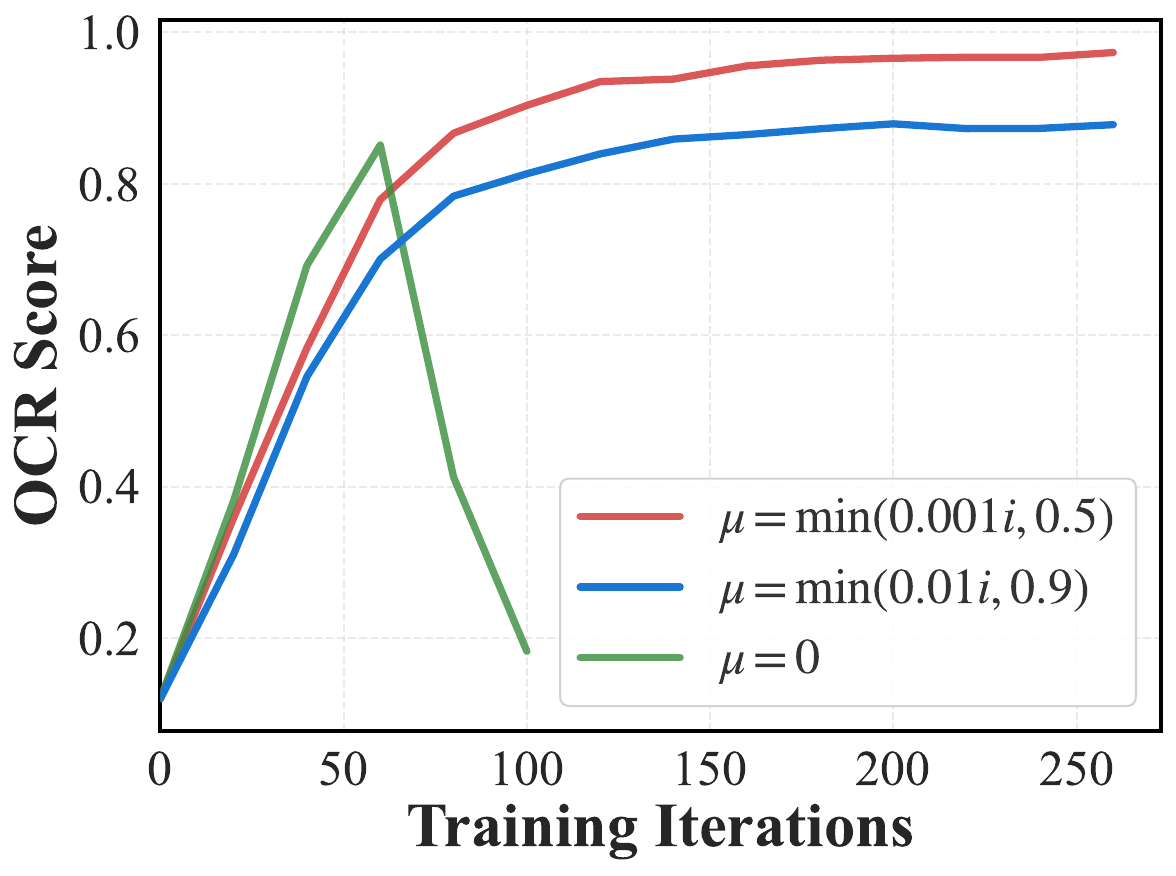} % 替换为你的图路径
        \caption{Influence of the soft online update strategy.}
        \label{fig:online_strategy}
    \end{minipage}
    \vspace{-0.2in}
\end{figure}

\subsection{Ablation Study}
\label{subsec:ablation_study}

We conduct comprehensive ablation studies to analyze our key design choices:

\paragraph{Group Size.}
The group size $G$ dictates the estimation quality of the partition function $\phi$ and the optimization boundaries. As derived from Eq.~\eqref{eq:Softmax_Advantage}, the advantage can be equivalently expressed as $\mathcal{A}(x_1^{(i)}) = G \cdot \text{Softmax}(R(c, x_1^{(i)}) / \gamma) - 1$. A smaller $G$ restricts the exploration magnitude and increases the estimation bias of $\phi$. Due to out-of-memory constraints at $G=48$, we evaluate $G \in \{8, 24, 32\}$ and adopt $G=24$, which strikes an effective balance between unbiasedness and computational cost, ensuring controlled comparisons with baselines (Figure~\ref{fig:group_size}).

\paragraph{Adaptive $\gamma$ Scaling.}
We dynamically scale the $\gamma$ in Eq.~\ref{eq:advantage_base} using the \texttt{std} of the current global reward. Compared to static assignments, Figure~\ref{fig:adaptive_gamma} demonstrates that this adaptive scaling effectively maintains stable optimization dynamics, mitigating the need for dataset-specific hyperparameter.

\paragraph{Adaptive Advantage.}
To validate the theoretical continuous formulation derived in Theorem~\ref{thm:advantage_rectification}, we ablate our adaptive advantage $\mathcal{A}$ against fixed magnitude assignment ($\mathcal{A} \in \{\pm 1,\pm 10\}$), which correspond to $\beta = 1$ and $0.1$ in Eq.~\eqref{eq:nft_coupled}. As shown in Figure~\ref{fig:adaptive_advantage}, an excessively large advantage (e.g., $\mathcal{A} = \pm 10$) leads to training collapse. In contrast, our rectification maintains stability and yields higher generation performance.

\paragraph{Soft Online Update.}
The EMA decay rate $\mu_i$ governs the evolution of the reference policy. Fully on-policy ($\mu_i = 0$) accelerates early progress but destabilizes training, whereas an overly off-policy setting (large constant $\mu$) slows convergence. As shown in Figure~\ref{fig:online_strategy}, starting with a small $\mu$ and gradually increasing it strikes an effective balance between convergence speed and training stability.

\section{Related Work}
\label{sec:related_work}

The broad application of RL in aligning LLMs~\cite{ouyang2022training, rafailov2023direct} has motivated efforts to align continuous generative models. Existing literature predominantly bifurcates into two paradigms.

\paragraph{Implicit Optimization and Guidance-based Methods.}
This paradigm circumvents exact likelihoods via implicit guidance, reward-weighted fine-tuning, or score modifications. 
Early methods like Reward-Weighted Regression (RWR)~\citep{lee2023aligning, dong2023raft} lack strict penalization for low-quality samples, 
while direct Reward Backpropagation~\citep{xu2023imagereward, prabhudesai2023aligning, clark2023directly} suffers from substantial memory overhead and gradient explosion through the ODE/SDE solvers. 
Inference-time guidance techniques~\citep{janner2022planning, jin2025inference} effectively steer generation but multiply inference latency. More recently, DiffusionNFT~\citep{zheng2025diffusionnft} optimizes the vector field directly via a deterministic, likelihood-free framework. Nevertheless, as analyzed in Sec.~\ref{subsec:discussion}, its reliance on a fixed binary scaling factor restricts its capacity to effectively exploit intra-group relative rewards.

\paragraph{Density-Approximated Policy Gradient Methods.}
To leverage standard RL algorithms, this paradigm explicitly approximates the intractable continuous density by formulating the sampling process as a MDP~\citep{black2023training, fan2023dpok}. Offline approaches like Diffusion-DPO~\citep{wallace2024diffusion, yang2024using} require additional likelihood approximations that diverge from the AR formulation. For online RL, methods such as DDPO~\citep{black2023training} and recent Flow Matching extensions~\citep{liu2025flow, xue2025dancegrpo} decompose trajectories to approximate step-wise likelihoods. 
However, to extract tractable variance for density estimation, they necessitate the injection of noise via SDE samplers during training, which introduces computational bottlenecks and creates a structural forward inconsistency between stochastic training and deterministic ODE inference. While MixGRPO~\citep{li2025mixgrpo} attempts to improve efficiency by mixing SDE and ODE steps, the underlying density approximation and issues remain unresolved.

\section{Conclusion}
\label{sec:conclusion}
We introduce Flow Advantage-Weighted Rectification (FlowAWR), an off-policy online reinforcement learning paradigm for continuous generative models. By deriving the optimal velocity field from a KL-constrained policy formulation, FlowAWR provides a unified analytical perspective that connects flow-based RL with supervised fine-tuning. This formulation also clarifies the relationship between prior methods such as DiffusionNFT and policy optimization, revealing them as discrete approximations within a continuous framework.  
Empirically, FlowAWR enables stable CFG-free optimization and achieves $2\times$ to $5\times$ faster convergence than DiffusionNFT in single-reward tasks, while remaining robust under the distribution shifts induced by multi-reward alignment.

% =====================================================
% End of migrated body.
% =====================================================
\clearpage
\bibliography{iclr2026_conference}
\bibliographystyle{iclr2026_conference}

\clearpage
\appendix
\setcounter{figure}{0}
\setcounter{table}{0}

\renewcommand{\thefigure}{S\arabic{figure}}
\renewcommand{\thetable}{S\arabic{table}}

\section{Extended Theoretical Derivations and Proofs}

\subsection{Derivation of the Optimal Policy}
\label{subsec:derivation_optimal_policy}

This section provides the analytical derivation of the closed-form optimal policy $\pi^*$, which is strictly optimized over the current fixed experience buffer $\mathcal{D}$ subject to a KL divergence constraint against the reference policy.

While the iterative update of $\pi^{\text{old}}$ enables online RL, the step-wise optimization of $\pi_\theta$ fundamentally solves a constrained reward maximization problem over the empirical states in $\mathcal{D}$:
\begin{equation*}
J(\pi_\theta) = \mathbb{E}_{s \sim \mathcal{D}} \left[ \mathbb{E}_{a \sim \pi_\theta(\cdot|s)}[R(s, a)] - \gamma D_{\text{KL}}(\pi_\theta(\cdot|s) || \pi^{\text{old}}(\cdot|s)) \right],
\end{equation*}
where maximizing the global objective over $\mathcal{D}$ is mathematically equivalent to solving the optimization for each state $s \in \mathcal{D}$ independently. Expanding the terms for a specific state $s$ yields:
\begin{equation*}
J(\pi_\theta(\cdot|s)) = \int \pi_\theta(a|s) R(s, a) da - \gamma \int \pi_\theta(a|s) \log \frac{\pi_\theta(a|s)}{\pi^{\text{old}}(a|s)} da.
\end{equation*}

To ensure $\pi_\theta(\cdot|s)$ is a valid probability density, a state-dependent Lagrangian multiplier $\lambda(s)$ is introduced for the normalization constraint $\int \pi_\theta(a|s) da = 1$, yielding the Lagrangian functional:
\begin{equation*}
\begin{split}
L(\pi_\theta, \lambda) =& \int \pi_\theta(a|s) R(s, a) da - \gamma \int \pi_\theta(a|s) \log \frac{\pi_\theta(a|s)}{\pi^{\text{old}}(a|s)} da \\
&+ \lambda(s) \left( 1 - \int \pi_\theta(a|s) da \right).
\end{split}
\end{equation*}

Taking the functional derivative of $L$ with respect to the density $\pi_\theta(a|s)$ and setting it to zero yields the first-order optimality condition:
\begin{equation*}
\frac{\delta L}{\delta \pi_\theta(a|s)} = R(s, a) - \gamma \left( \log \frac{\pi_\theta(a|s)}{\pi^{\text{old}}(a|s)} + 1 \right) - \lambda(s) = 0.
\end{equation*}

Rearranging the terms to isolate $\pi_\theta(a|s)$:
\begin{align*}
\log \frac{\pi_\theta(a|s)}{\pi^{\text{old}}(a|s)} &= \frac{R(s, a)}{\gamma} - 1 - \frac{\lambda(s)}{\gamma}, \\
\pi_\theta(a|s) &= \pi^{\text{old}}(a|s) \exp\left( \frac{R(s, a)}{\gamma} \right) \exp\left( -1 - \frac{\lambda(s)}{\gamma} \right).
\end{align*}

Since the term $\exp\left( -1 - \frac{\lambda(s)}{\gamma} \right)$ is independent of $a$, it acts as a normalization constant. Defining its inverse as the partition function $Z(s)$:
\begin{equation*}
Z(s) = \exp\left( 1 + \frac{\lambda(s)}{\gamma} \right) = \int \pi^{\text{old}}(a|s) \exp\left( \frac{R(s, a)}{\gamma} \right) da.
\end{equation*}

Substituting $Z(s)$ back yields the theoretical optimal policy $\pi^*$ corresponding to Eq.~\eqref{eq:optimal_policy_sol}:
\begin{equation}
\pi^*(a|s) = \frac{1}{Z(s)} \pi^{\text{old}}(a|s) \exp\left(\frac{R(s, a)}{\gamma}\right).
\label{eq:appendix_optimal_policy_final}
\end{equation}

\subsection{Proof of Proposition 1}
\label{app:proof_proposition_1}

\textbf{Proposition 1.} \textit{The optimal marginal density at time $t$ relates to the reference density $p_t^{\text{old}}$ via a time-dependent value function $\phi(x_t, c, t)$:}
\begin{equation*}
    p_t^*(x_t | c) = \frac{1}{Z(c)} p_t^{\text{old}}(x_t | c) \cdot \phi(x_t, c, t),
\end{equation*}
\textit{where $\phi(x_t, c, t) = \mathbb{E}_{x_1 \sim p^{\text{old}}(\cdot | x_t, c)} \left[ \exp\left(\frac{R(c, x_1)}{\gamma}\right) \right]$.}

\begin{proof}
The marginal density $p_t^*(x_t|c)$ is obtained by marginalizing the joint distribution over the terminal state $x_1$:
\begin{equation*}
    p_t^*(x_t | c) = \int p(x_t | x_1) p_1^*(x_1 | c) dx_1.
\end{equation*}

The forward transition kernel $p(x_t | x_1)$ is determined by the ReFlow schedule (e.g., $x_t = t x_1 + (1-t) x_0$) and remains strictly independent of the policy. Substituting the optimal terminal distribution $p_1^*(x_1 | c) = \frac{1}{Z(c)} p_1^{\text{old}}(x_1 | c) \exp\left(\frac{R(c, x_1)}{\gamma}\right)$:
\begin{equation*}
    p_t^*(x_t | c) = \frac{1}{Z(c)} \int p(x_t | x_1) p_1^{\text{old}}(x_1 | c) \exp\left(\frac{R(c, x_1)}{\gamma}\right) dx_1.
\end{equation*}

Applying Bayes' theorem to the reference policy terms, the joint probability can be factorized as $p(x_t | x_1) p_1^{\text{old}}(x_1 | c) = p^{\text{old}}(x_1 | x_t, c) p_t^{\text{old}}(x_t | c)$. Substituting this factorization back into the integral formulation:
\begin{equation*}
    p_t^*(x_t | c) = \frac{1}{Z(c)} \int p_t^{\text{old}}(x_t | c) p^{\text{old}}(x_1 | x_t, c) \exp\left(\frac{R(c, x_1)}{\gamma}\right) dx_1.
\end{equation*}

Since the intermediate reference marginal $p_t^{\text{old}}(x_t | c)$ is independent of the integration variable $x_1$, it can be factored out of the integral:
\begin{equation*}
    p_t^*(x_t | c) = \frac{1}{Z(c)} p_t^{\text{old}}(x_t | c) \underbrace{\int p^{\text{old}}(x_1 | x_t, c) \exp\left(\frac{R(c, x_1)}{\gamma}\right) dx_1}_{\phi(x_t, c, t)}.
\end{equation*}

The integral term mathematically corresponds to the conditional expectation over the posterior distribution $p^{\text{old}}(x_1 | x_t, c)$, which explicitly defines the value function $\phi(x_t, c, t)$.
\end{proof}

\subsection{Derivation of Tweedie's Formula in Flow Matching}
\label{subsec:derivation_tweedie}

To establish the mathematical foundation, this section provides the analytical derivation of Tweedie's formula within the context of Flow Matching, which explicitly maps the posterior mean estimation to the marginal score function.

Under the ReFlow formulation, the forward probability path is constructed via a linear interpolation between the noise $x_0 \sim \mathcal{N}(0, I)$ and the data $x_1 \sim p_1$:
\begin{equation*}
    x_t = t x_1 + (1-t) x_0.
\end{equation*}

Given the Gaussian conditional distribution $p(x_t | x_1) = \mathcal{N}(x_t; t x_1, (1-t)^2 I)$, the corresponding conditional score function evaluates to:
\begin{equation*}
    \nabla_{x_t} \log p(x_t | x_1) = -\frac{x_t - t x_1}{(1-t)^2}.
\end{equation*}

The marginal score function $\nabla_{x_t} \log p_t(x_t | c)$ can be expressed by marginalizing over the terminal data distribution $p_1(x_1 | c)$ and applying Bayes' theorem:
\begin{equation*}            
\begin{split}
    \nabla_{x_t} \log p_t(x_t | c) &= \frac{1}{p_t(x_t | c)} \nabla_{x_t} p_t(x_t | c) \\
    &= \frac{1}{p_t(x_t | c)} \int p_1(x_1 | c) \nabla_{x_t} p(x_t | x_1) dx_1 \\
    &= \int \frac{p(x_t | x_1) p_1(x_1 | c)}{p_t(x_t | c)} \nabla_{x_t} \log p(x_t | x_1) dx_1 \\
    &= \int p(x_1 | x_t, c) \nabla_{x_t} \log p(x_t | x_1) dx_1.
\end{split}
\end{equation*}

Substituting the explicit form of the conditional score function into the integral gives:
\begin{equation*}
\begin{split}
    \nabla_{x_t} \log p_t(x_t | c) &= \int p(x_1 | x_t, c) \left( -\frac{x_t - t x_1}{(1-t)^2} \right) dx_1 \\
    &= -\frac{x_t}{(1-t)^2} + \frac{t}{(1-t)^2} \int p(x_1 | x_t, c) x_1 dx_1.
\end{split}
\end{equation*}

Substituting the posterior mean $\hat{x}_1(x_t, c) = \int p(x_1 | x_t, c) x_1 dx_1$:
\begin{equation*}
    \nabla_{x_t} \log p_t(x_t | c) = -\frac{x_t}{(1-t)^2} + \frac{t}{(1-t)^2} \hat{x}_1(x_t, c).
\end{equation*}

Rearranging the terms to isolate $\hat{x}_1(x_t, c)$ yields Tweedie's formula, connecting the posterior mean estimation directly to the marginal score:
\begin{equation}
    \hat{x}_1(x_t, c) = \frac{x_t}{t} + \frac{(1-t)^2}{t} \nabla_{x_t} \log p_t(x_t | c).
\label{eq:appendix_tweedie_formula}
\end{equation}

\subsection{Proof of Theorem 1}
\label{app:proof_theorem_1}
\textbf{Theorem 1.} \textit{The optimal velocity field $v^*(x_t, c, t)$ can be equivalently expressed as the reference field rectified by an advantage-weighted residual expectation:}
\begin{equation*}
    v^*(x_t, c, t) = v^{\text{old}}(x_t, c, t) + \mathbb{E}_{x_1 \sim p^{\text{old}}(\cdot | x_t, c)} \left[ \mathcal{A}(x_1, x_t) \left( u_t(x_t | x_1) - v^{\text{old}}(x_t, c, t) \right) \right].
\end{equation*}
\textit{where $\mathcal{A}(x_1, x_t)$ is the Centered Advantage, constructed by leveraging the zero-expectation property $\mathbb{E}[u_t - v^\text{old}] = 0$ to subtract a constant baseline of $1$:}
\begin{equation*}
    \mathcal{A}(x_1, x_t) = \frac{\exp\left(\frac{R(c, x_1)}{\gamma}\right)}{\phi(x_t, c, t)} - 1.
\end{equation*}

\begin{proof}
From Eq.~\eqref{eq:value_gradient_velocity}, the optimal velocity field is strictly determined by the score function of the intermediate optimal marginal:
\begin{equation}
    v^*(x_t, c, t) = v^{\text{old}}(x_t, c, t) + \frac{1-t}{t} \nabla_{x_t} \log \phi(x_t, c, t).
    \label{eq:base_optimal_velocity}
\end{equation}

Expanding the gradient of the log value function and applying the logarithmic derivative identity $\nabla_{x_t} p = p \nabla_{x_t} \log p$, the gradient is derived as:
\begin{align}
    \nabla_{x_t} \log \phi(x_t, c, t) &= \frac{\nabla_{x_t} \phi(x_t, c, t)}{\phi(x_t, c, t)} \nonumber \\
    &= \frac{1}{\phi(x_t, c, t)} \nabla_{x_t} \int p^{\text{old}}(x_1 | x_t, c) \exp\left(\frac{R(c, x_1)}{\gamma}\right) dx_1 \nonumber \\
    &= \frac{1}{\phi(x_t, c, t)} \int p^{\text{old}}(x_1 | x_t, c) \exp\left(\frac{R(c, x_1)}{\gamma}\right) \nonumber \\
    &\quad \cdot \nabla_{x_t} \log p^{\text{old}}(x_1 | x_t, c) dx_1.
    \label{eq:appendix_grad_log_phi}
\end{align}

Relating the score of the posterior $\nabla_{x_t} \log p^{\text{old}}(x_1 | x_t, c)$ to the forward transition fields via Bayes' theorem factorizes the gradient into a direct linear relationship with the velocity residual:
\begin{align}
    \nabla_{x_t} \log p^{\text{old}}(x_1 | x_t, c) &= \nabla_{x_t} \log p(x_t | x_1) - \nabla_{x_t} \log p^{\text{old}}_t(x_t | c) \nonumber \\
    &= \frac{t x_1 - x_t}{(1-t)^2} - \frac{t \hat{x}_1^{\text{old}} - x_t}{(1-t)^2} \nonumber \\
    &= \frac{t}{1-t} \left( \frac{x_1 - \hat{x}_1^{\text{old}}}{1-t} \right) \nonumber \\
    &= \frac{t}{1-t} \left( u_t(x_t | x_1) - v^{\text{old}}(x_t, c, t) \right),
    \label{eq:appendix_score_velocity_link}
\end{align}
where the conditional and marginal scores are derived from Tweedie's formula in Sec \ref{subsec:derivation_tweedie}, while $u_t(x_t | x_1) = \frac{x_1 - x_t}{1-t}$ and $v^{\text{old}}(x_t, c, t) = \frac{\hat{x}_1^{\text{old}} - x_t}{1-t}$ denote the conditional flow target and reference velocity, respectively.

Substituting Eq.\eqref{eq:appendix_score_velocity_link} into Eq.\eqref{eq:appendix_grad_log_phi} transforms the integral into an expectation over the posterior:
\begin{align*}
    \nabla_{x_t} \log \phi(x_t, c, t) &= \frac{t}{1-t} \mathbb{E}_{x_1 \sim p^{\text{old}}(\cdot|x_t, c)} \Bigg[ \\
    &\quad \frac{\exp\left(\frac{R(c, x_1)}{\gamma}\right)}{\phi(x_t, c, t)} \left( u_t(x_t | x_1) - v^{\text{old}}(x_t, c, t) \right) \Bigg].
\end{align*}

Substituting this explicit gradient form back into the base velocity Eq.\eqref{eq:base_optimal_velocity}:
\begin{equation*}
\begin{split}
    v^*(x_t, c, t) =& v^{\text{old}}(x_t, c, t) \\
    &+ \mathbb{E}_{x_1 \sim p^{\text{old}}(\cdot|x_t, c)} \left[ \frac{\exp\left(\frac{R(c, x_1)}{\gamma}\right)}{\phi(x_t, c, t)} \left( u_t(x_t | x_1) - v^{\text{old}}(x_t, c, t) \right) \right].
\end{split}
\end{equation*}

Since the velocity residual has zero expectation, $\mathbb{E}_{x_1 \sim p^{\text{old}}(\cdot|x_t, c)} [ u_t(x_t | x_1) - v^{\text{old}}(x_t, c, t) ] = 0$, subtracting a unit baseline from the weight preserves the equality:
\begin{align*}
    v^*(x_t, c, t) &= v^{\text{old}}(x_t, c, t) \\
    & + \mathbb{E}_{x_1 \sim p^{\text{old}}(\cdot|x_t, c)} \left[ \left( \frac{\exp\left(\frac{R(c, x_1)}{\gamma}\right)}{\phi(x_t, c, t)} - 1 \right) \left( u_t(x_t | x_1) - v^{\text{old}}(x_t, c, t) \right) \right].
\end{align*}

Defining the centered advantage as $\mathcal{A}(x_1, x_t) = \frac{\exp(R(c, x_1)/\gamma)}{\phi(x_t, c, t)} - 1$ explicitly yields the exact formulation of the optimal velocity field, completing the proof.
\end{proof}

\subsection{Proof of Theorem 2}
\label{app:proof_theorem_2}

\textbf{Theorem 2.} \textit{The gradients of the Advantage-Weighted Rectification Loss $\mathcal{L}_{\text{AWR}}$ and the Ideal Loss $\mathcal{L}_{\text{Ideal}}$ are identical with respect to $\theta$:}
\begin{equation*}
    \nabla_\theta \mathcal{L}_{\text{AWR}}(\theta) = \nabla_\theta \mathcal{L}_{\text{Ideal}}(\theta).
\label{eq:appendix_theorem2_statement}
\end{equation*}

\begin{proof}
Define the stochastic target derived from Theorem 1 as $Y(x_1, x_t, c, t) = v^{\text{old}}(x_t, c, t) + \mathcal{A}(x_1, x_t) \left( u_t(x_t | x_1) - v^{\text{old}}(x_t, c, t) \right)$.
Theorem~\ref{thm:advantage_rectification} establishes that the theoretical optimal velocity field is the conditional expectation of this stochastic target: $v^*(x_t, c, t) = \mathbb{E}_{x_1 \sim p^{\text{old}}(\cdot|x_t, c)} [Y(x_1, x_t, c, t)]$.

The Advantage-Weighted Rectification loss is formulated as:
\begin{equation*}
    \mathcal{L}_{\text{AWR}}(\theta) = \mathbb{E}_{t, c, x_t} \left[ \mathbb{E}_{x_1 \sim p^{\text{old}}(\cdot|x_t, c)} \left[ \| v_\theta(x_t, c, t) - Y(x_1, x_t, c, t) \|^2 \right] \right].
\end{equation*}

Expanding the squared $L_2$ norm by introducing the optimal field $v^*(x_t, c, t)$ yields a standard bias-variance decomposition:
\begin{align*}
    \| v_\theta - Y \|^2 &= \| (v_\theta - v^*) + (v^* - Y) \|^2 \\
    &= \| v_\theta - v^* \|^2 + \| v^* - Y \|^2 + 2 \langle v_\theta - v^*, v^* - Y \rangle.
\end{align*}

Substituting this decomposition into the expectation separates the loss function into three distinct terms:
\begin{align*}
    \mathcal{L}_{\text{AWR}}(\theta) &= \mathbb{E}_{t, c, x_t} \left[ \| v_\theta(x_t, c, t) - v^*(x_t, c, t) \|^2 \right] \nonumber \\
    &\quad + \mathbb{E}_{t, c, x_t, x_1} \left[ \| v^*(x_t, c, t) - Y(x_1, x_t, c, t) \|^2 \right] \nonumber \\
    &\quad + \mathbb{E}_{t, c, x_t} \left[ 2 \langle v_\theta - v^*, \mathbb{E}_{x_1 | x_t, c} [v^* - Y] \rangle \right].
    \label{eq:appendix_loss_decomposition}
\end{align*}

The three terms evaluate as follows:
\begin{itemize}
    \item The first term is strictly the Ideal Loss $\mathcal{L}_{\text{Ideal}}(\theta)$, which measures the regression error between the parameterized velocity and the theoretical optimal field.
    \item The second term represents the irreducible variance of the stochastic target $Y$. Because this term depends solely on the data and the reference policy, it is independent of the model parameters $\theta$, yielding a gradient of zero.
    \item In the third term, the inner expectation evaluates to $\mathbb{E}_{x_1 | x_t, c} [v^* - Y] = v^* - \mathbb{E}_{x_1 | x_t, c}[Y]$. By the definition of $v^*$, this equals $v^* - v^* = 0$, causing the entire cross-term to vanish.
\end{itemize}

Consequently, the AWR loss simplifies to the Ideal Loss plus an independent constant term $C$:
\begin{equation*}
    \mathcal{L}_{\text{AWR}}(\theta) = \mathcal{L}_{\text{Ideal}}(\theta) + C.
\end{equation*}

Taking the gradient with respect to $\theta$ eliminates the constant $C$, proving the exact optimization equivalence:
\begin{equation*}
    \nabla_\theta \mathcal{L}_{\text{AWR}}(\theta) = \nabla_\theta \mathcal{L}_{\text{Ideal}}(\theta).
\end{equation*}
\end{proof}

\section{Theoretical Discussions with RWR and DPO}
\label{sec:theoretical_discussions}

As highlighted in the Introduction of the main text, both RWR and DPO represent distinct alignment paradigms that fundamentally originate from the same theoretical optimal policy. Despite their divergent practical implementations, both methods can be mathematically unified under the KL-constrained reward maximization framework, leveraging the closed-form optimal density derived in Eq.\eqref{eq:appendix_optimal_policy_final}.

\subsection{Connection to RWR in Flow Matching}
\label{subsec:discussion_rwr_flow}

Under the ReFlow schedule $x_t = t x_1 + (1 - t) x_0$, aligning a parameterized velocity $v_\theta(x_t, c, t)$ with the ideal marginal velocity $v^*(x_t, c, t) = \mathbb{E}_{x_1 \sim p^*(\cdot | x_t, c)} [ u_t(x_t | x_1) ]$ yields the global matching objective:
\begin{equation*}
    \mathcal{L}_{\text{FM}}(\theta) = \mathbb{E}_{t, x_t \sim p_t^*} \left[ \| v_\theta(x_t, c, t) - v^*(x_t, c, t) \|^2 \right].
\end{equation*}

To bypass the intractable marginal expectation in $v^*$, Conditional Flow Matching (CFM) establishes a gradient equivalence. By expanding the squared norm and marginalizing the cross-term $\mathbb{E}_{x_t \sim p_t^*} [\langle v_\theta, v^* \rangle]$, the objective reduces to matching the conditional field:
\begin{align*}
    \iint p_1^*(x_1|c) p(x_t|x_1) \langle v_\theta, u_t \rangle dx_1 dx_t = \mathbb{E}_{x_1 \sim p_1^*} \mathbb{E}_{x_t \sim p(\cdot|x_1)} [\langle v_\theta, u_t \rangle].
\end{align*}
which recovers the conditional matching objective over the optimal target distribution:
\begin{equation*}
    \mathcal{L}_{\text{CFM}}(\theta) = \mathbb{E}_{t, x_1 \sim p_1^*, x_0 \sim \pi_0} \left[ \| v_\theta(x_t, c, t) - (x_1 - x_0) \|^2 \right].
\end{equation*}

Since sampling directly from $p_1^*$ is practically infeasible, applying importance sampling from the accessible reference policy $p_1^{\text{old}}$ re-weights the expectation by the density ratio $\frac{p_1^*(x_1|c)}{p_1^{\text{old}}(x_1|c)} \propto \exp\left(\frac{R(c, x_1)}{\gamma}\right)$. Absorbing the partition function $Z(c)$ establishes the final RWR objective:
\begin{equation*}
    \mathcal{L}_{\text{RWR}}(\theta) \propto \mathbb{E}_{x_1 \sim p_1^{\text{old}}} \left[ \exp\left(\frac{R(c, x_1)}{\gamma}\right) \mathbb{E}_{t, x_0 \sim \pi_0} \left[ \| v_\theta(x_t, c, t) - (x_1 - x_0) \|^2 \right] \right].
\end{equation*}
which bridges the classical RWR alignment paradigm with continuous generative processes, demonstrating that exponentiated reward re-weighting naturally emerges from projecting the velocity onto the optimal policy trajectory.

\subsection{Connection to DPO}
\label{subsec:discussion_dpo}

Unlike RWR's regression approach, DPO bypasses explicit reward modeling. Rearranging the logarithmic form of the optimal policy analytically reparameterizes the unobserved reward $R(s, a)$ strictly in terms of policy probabilities:
\begin{equation*}
    R(s, a) = \gamma \log \frac{\pi^*(a|s)}{\pi^{\text{old}}(a|s)} + \gamma \log Z(s).
\end{equation*}

Under the Bradley-Terry (BT) preference model, the probability that an action $y_w$ is preferred over $y_l$ is defined by the logistic function of their reward difference, $P(y_w \succ y_l | s) = \sigma \left( R(s, y_w) - R(s, y_l) \right)$. Substituting the reparameterized rewards perfectly cancels the intractable partition function $Z(s)$:
\begin{equation*}
    R(s, y_w) - R(s, y_l) = \gamma \log \frac{\pi^*(y_w|s)}{\pi^{\text{old}}(y_w|s)} - \gamma \log \frac{\pi^*(y_l|s)}{\pi^{\text{old}}(y_l|s)}.
\end{equation*}

Parameterizing $\pi^*$ with a learnable policy $\pi_\theta$ and minimizing the negative log-likelihood over a preference dataset $\mathcal{D}$ directly yields the exact DPO objective:
\begin{equation}
    \mathcal{L}_{\text{DPO}}(\theta) = -\mathbb{E}_{(s, y_w, y_l) \sim \mathcal{D}} \left[ \log \sigma \left( \gamma \log \frac{\pi_\theta(y_w|s)}{\pi^{\text{old}}(y_w|s)} - \gamma \log \frac{\pi_\theta(y_l|s)}{\pi^{\text{old}}(y_l|s)} \right) \right].
\label{eq:appendix_dpo_loss}
\end{equation}
which confirms that DPO structurally inherits the KL-constrained optimality criteria, simply reformulating it to accommodate pairwise preference data.

\section{Experiment Details}

\subsection{Empirical Range of the Centered Advantage}

\begin{figure*}[htbp]
    \centering
    \vspace{-0.2in}
    \begin{subfigure}{0.32\linewidth}
        \centering
        \includegraphics[width=\linewidth]{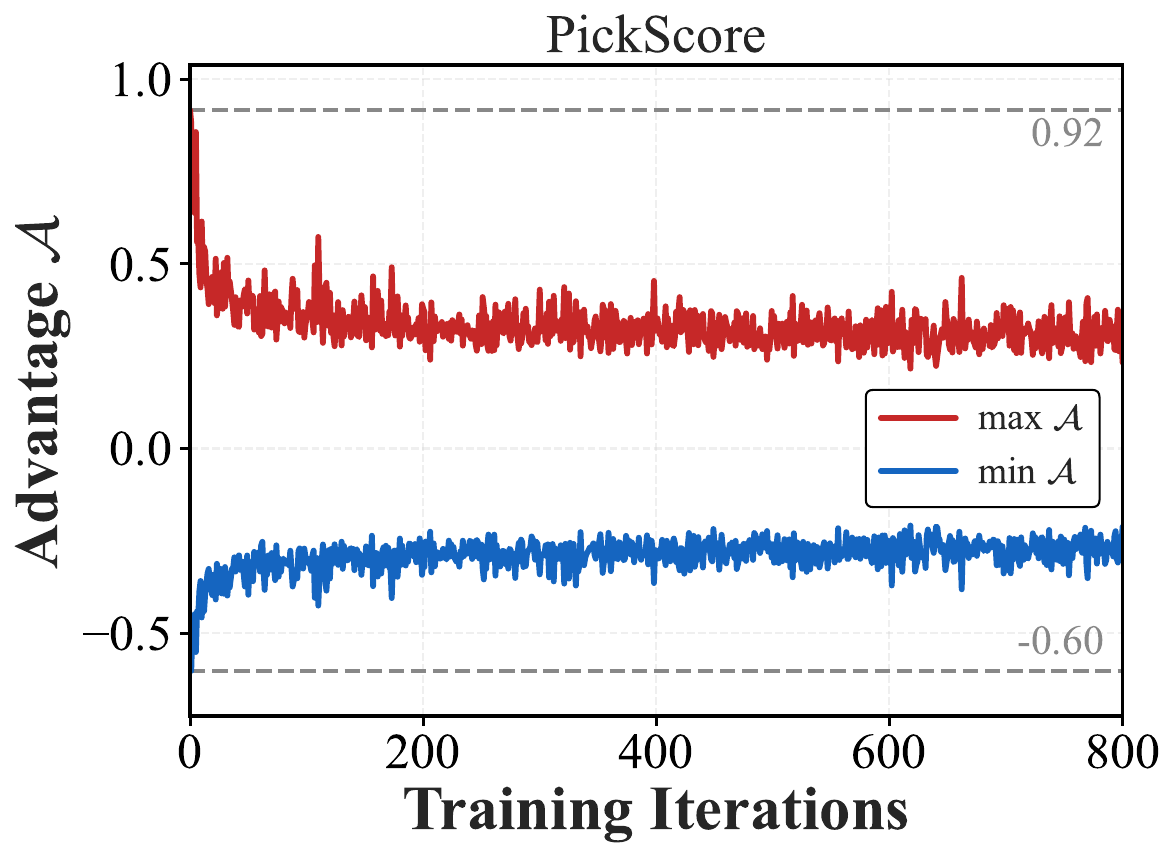}
        \caption{PickScore}
        \label{fig:adv_pickscore}
    \end{subfigure}\hfill
    \begin{subfigure}{0.32\linewidth}
        \centering
        \includegraphics[width=\linewidth]{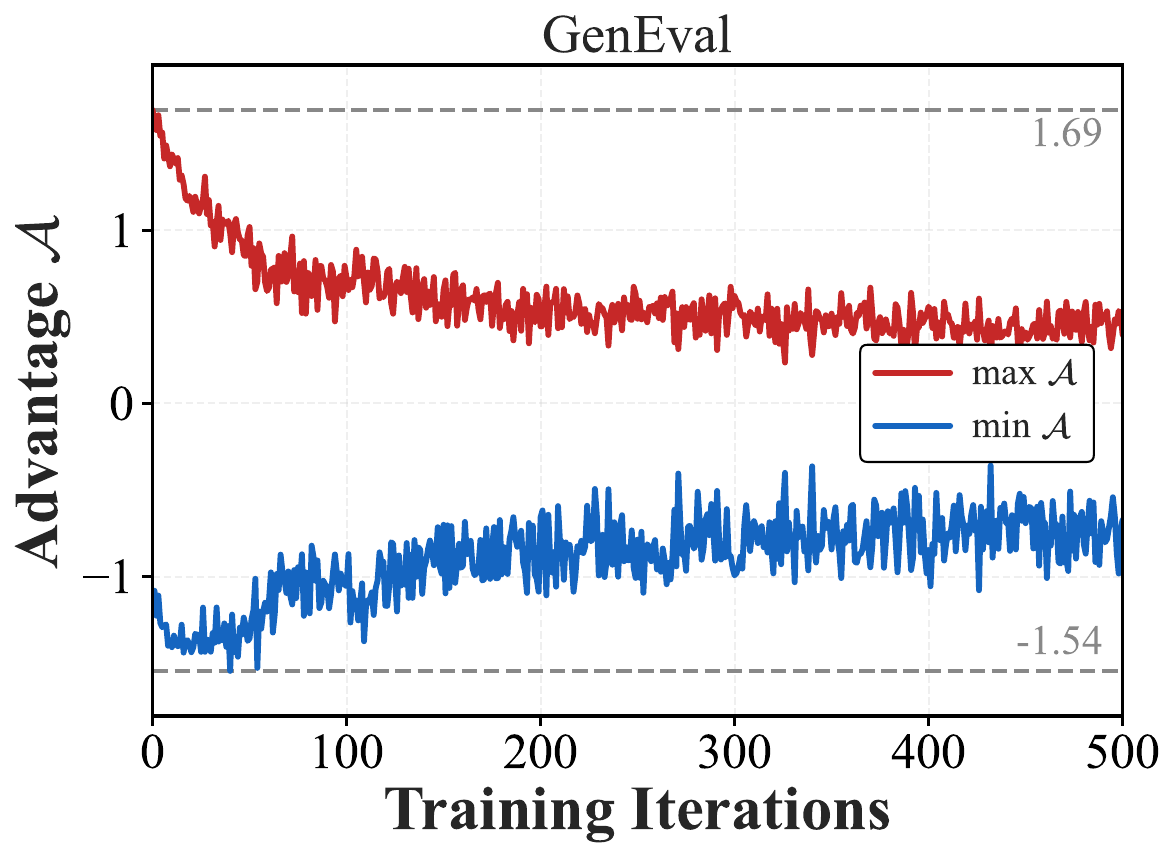}
        \caption{GenEval}
        \label{fig:adv_geneval}
    \end{subfigure}\hfill
    \begin{subfigure}{0.32\linewidth}
        \centering
        \includegraphics[width=\linewidth]{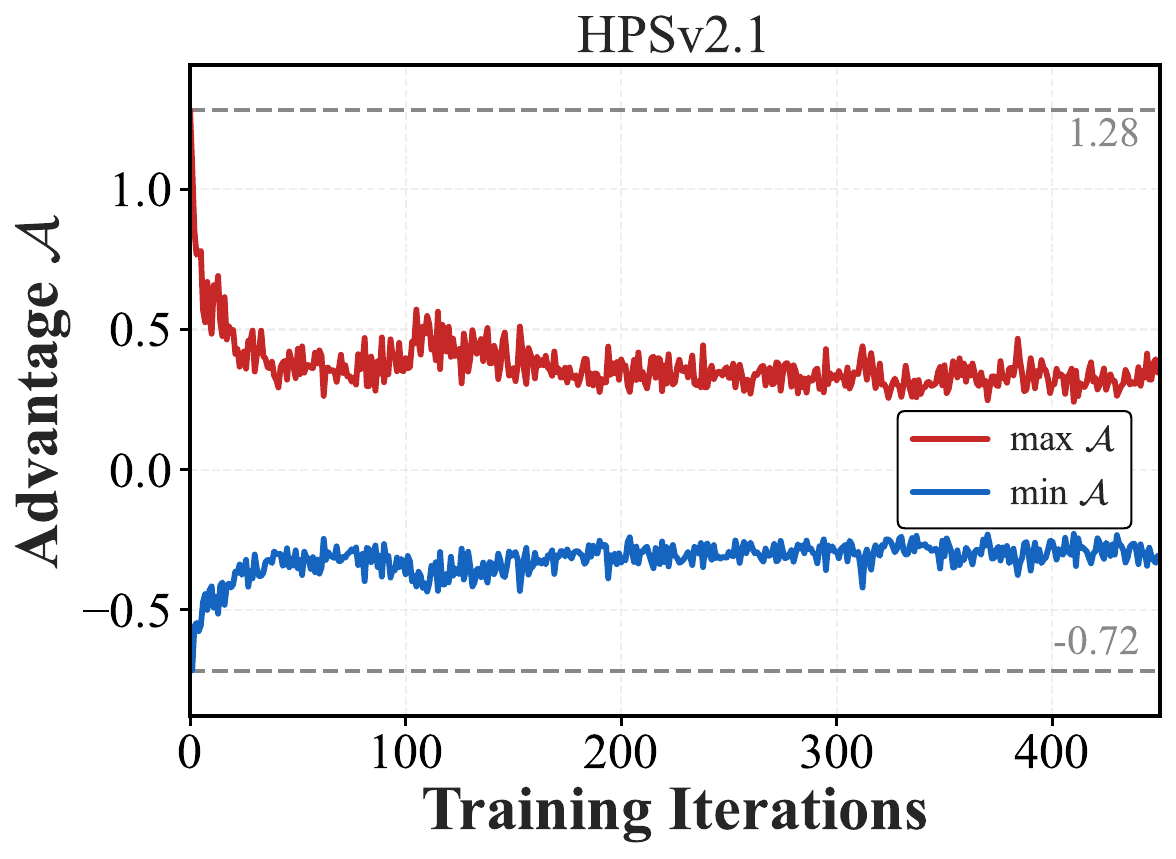}
        \caption{HPSv2.1}
        \label{fig:adv_hpsv2}
    \end{subfigure}
    
    \vspace{1em} % 两行之间的垂直间距
    
    % 第二行：OCR + 2个多奖励任务
    \begin{subfigure}{0.32\linewidth}
        \centering
        \includegraphics[width=\linewidth]{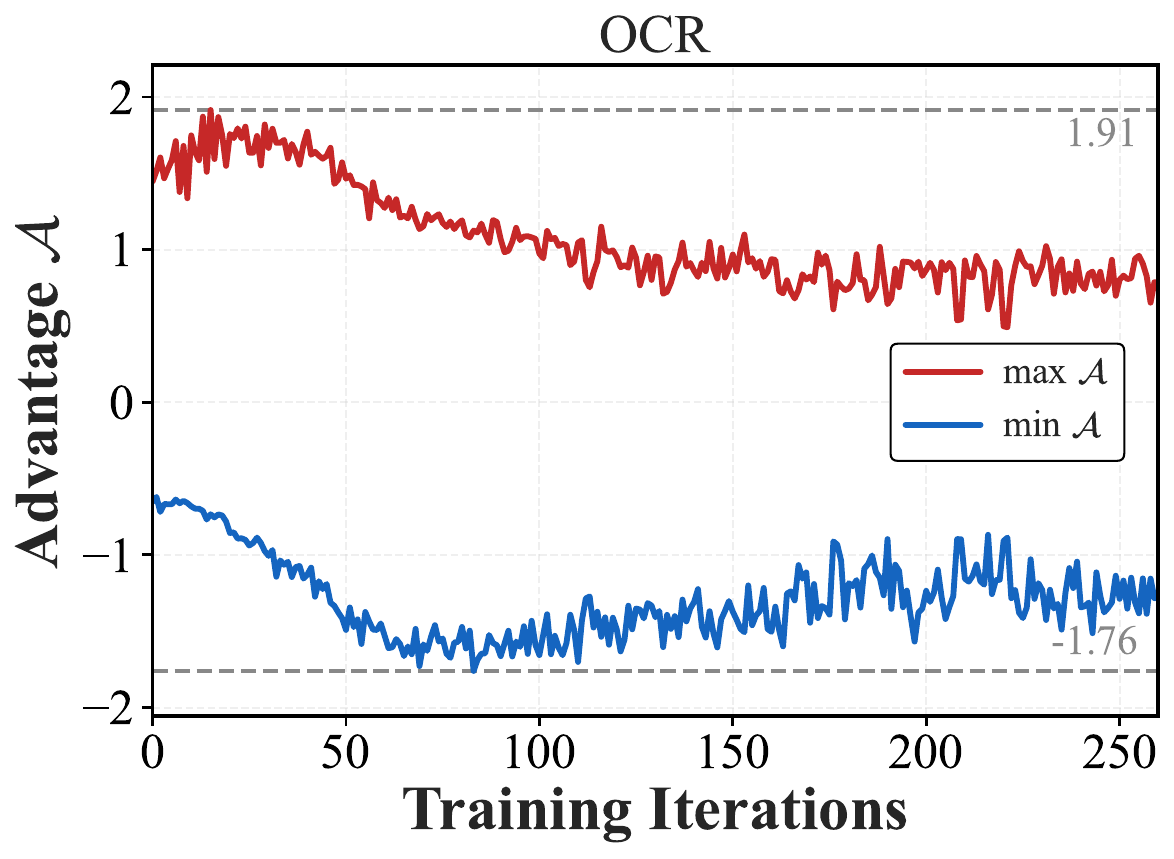}
        \caption{OCR}
        \label{fig:adv_ocr}
    \end{subfigure}\hfill
    \begin{subfigure}{0.32\linewidth}
        \centering
        \includegraphics[width=\linewidth]{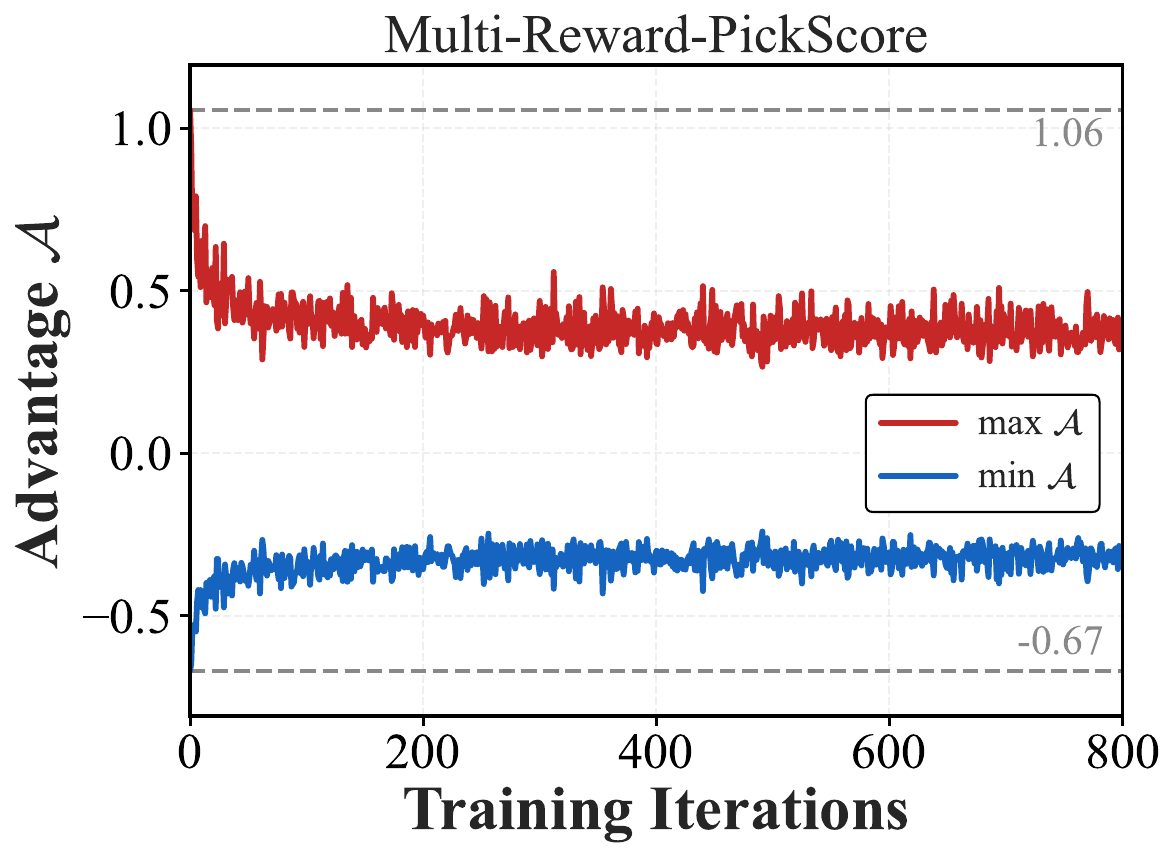}
        \caption{Multi-Reward (PickScore)}
        \label{fig:adv_multi_pickscore}
    \end{subfigure}\hfill
    \begin{subfigure}{0.32\linewidth}
        \centering
        \includegraphics[width=\linewidth]{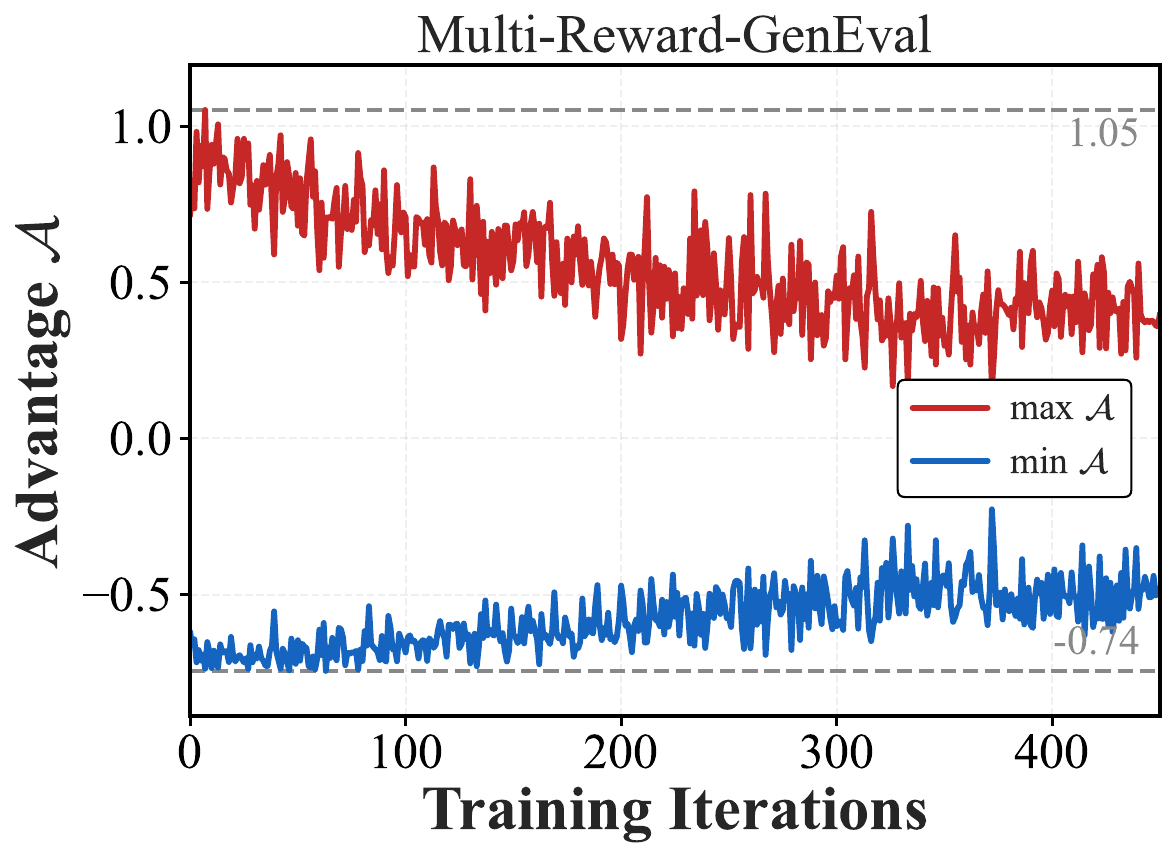}
        \caption{Multi-Reward (GenEval)}
        \label{fig:adv_multi_geneval}
    \end{subfigure}
    
    \caption{Empirical variation of the centered advantage $\mathcal{A}(x_1, x_t)$ across six alignment tasks. Under both single-reward and multi-reward settings, $\mathcal{A}(x_1, x_t)$ remains structurally bounded. It provides discriminative learning signals while naturally restricting the upper and lower bound.}
    \label{fig:advantage_6_tasks}
    \vspace{-0.in}
\end{figure*}

To validate the numerical stability of the proposed framework, Figure \ref{fig:advantage_6_tasks} illustrates the empirical variation range of the centered advantage $\mathcal{A}(x_1, x_t)$ across six distinct alignment tasks, encompassing both single-reward (PickScore, GenEval, HPSv2.1, OCR) and multi-reward optimizations. While the advantage values exhibit the necessary fluctuations to provide discriminative learning signals, their overall dynamic range remains strictly bounded and controllable throughout the training process. Specifically, the lower bound is naturally restricted near $-1$ for highly penalized samples, and the upper bound is effectively constrained without exhibiting the exponential explosion typical of standard reward-weighted regression. This structurally bounded behavior confirms that the state-dependent value function normalization $\phi(x_t, c, t)$ successfully regulates the scale of the learning signal, ensuring stable gradient updates across diverse objectives without requiring heuristic clipping operations.

\subsection{Experimental Setup}
\label{subsec:appendix_experimental_setup}

\paragraph{Training Configurations.} 
The training setup follows FlowGRPO and DiffusionNFT, which adopts 48 groups per epoch, a group size of 24, and a LoRA parameterization ($r=32$, $\alpha=64$) with a constant learning rate of $3 \times 10^{-4}$. For each collected clean image, forward noising and loss computation are performed on the corresponding sampling timesteps. Data collection employs a 2nd-order ODE sampler with adaptive time weighting by default.

\paragraph{Single-Reward Optimization.} 
For fair comparison with DiffusionNFT, we use the same 10 sampling steps and set our update rate to $\eta_i = \min(0.001i, 0.5)$ across the three tasks, while DiffusionNFT uses its reported configurations. By default, the advantage is simply scaled by the group size. However, for rule-based tasks (OCR, GenEval), this standard scaling struggles with highly unstable reward dynamics. To ensure stability, we replace it with an adaptive variance gating mechanism that normalizes the advantage by its batch-wise \texttt{std}. This dynamically damps excessive variance during early training, while a strict lower bound of $10^{-2}$ prevents numerical explosion in later stages.

\paragraph{Multi-Reward Optimization.} 
Following the multi-stage curriculum of DiffusionNFT, the training is structured around three dataset-specific configurations: (1) \textbf{3 rewards} (PickScore, CLIPScore, and HPSv2.1) on the Pick-a-Pic dataset; (2) \textbf{4 rewards} (the previous three plus GenEval) on the GenEval dataset; and (3) \textbf{4 rewards} (the initial three plus OCR) on the OCR dataset. To overcome the low quality of initial CFG-free generations, the curriculum sequentially applies configuration (1) for 800 iterations, (2) for 200 iterations, (1) for 200 iterations, (2) for 200 iterations, and finally concludes with (3) for 100 iterations. 
Within each stage, all active rewards are equally weighted, with PickScore explicitly divided by 26 for $[0, 1]$ normalization. For data collection, we maintain the sampling steps as 10 to ensure training efficiency. And we maintain the default update rate $\eta_i = \min(0.001i, 0.5)$ uniformly across all stages. 

Note that these iteration counts are approximate. To ensure continuous optimization during stage transitions, the complete training state is strictly resumed from the preceding stage, including the model parameters $\theta$, the reference policy $\theta_{\text{old}}$, the optimizer, and the gradient scaler.

\section{Additional Qualitative Results}
\label{sec:additional_results}

We provide additional qualitative comparisons between the base model, FlowGRPO, and our proposed method in Figure \ref{fig:qual_geneval}, Figure \ref{fig:qual_ocr}, and Figure \ref{fig:qual_drawbench}, corresponding to the GenEval, OCR, and DrawBench benchmarks, respectively.

\begin{figure*}[t]
    \centering
    \includegraphics[width=0.96\linewidth]{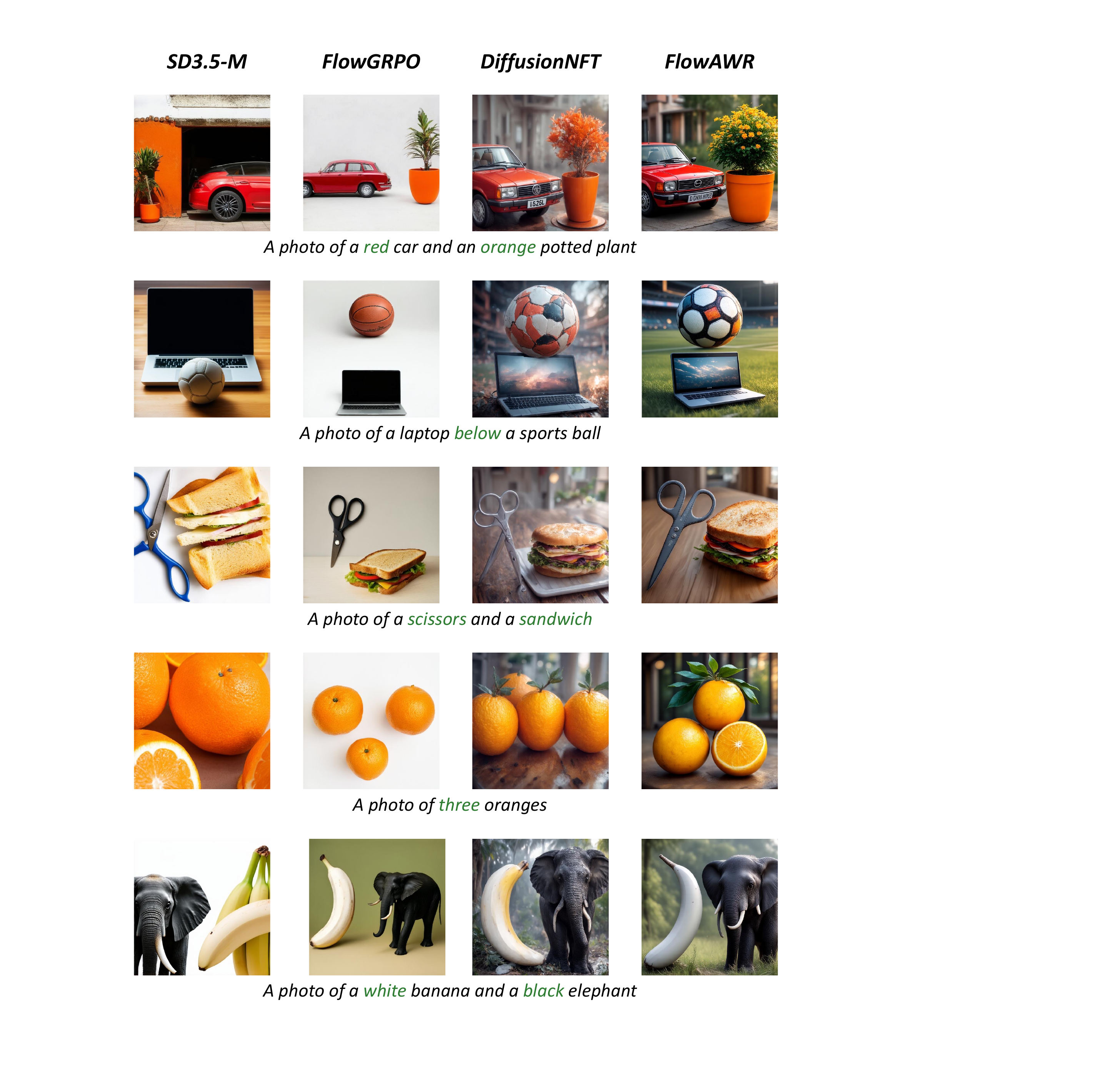}
    \caption{Qualitative comparison on the GenEval benchmark.}
    \label{fig:qual_geneval}
\end{figure*}

\begin{figure*}[t]
    \centering
    \includegraphics[width=0.96\linewidth]{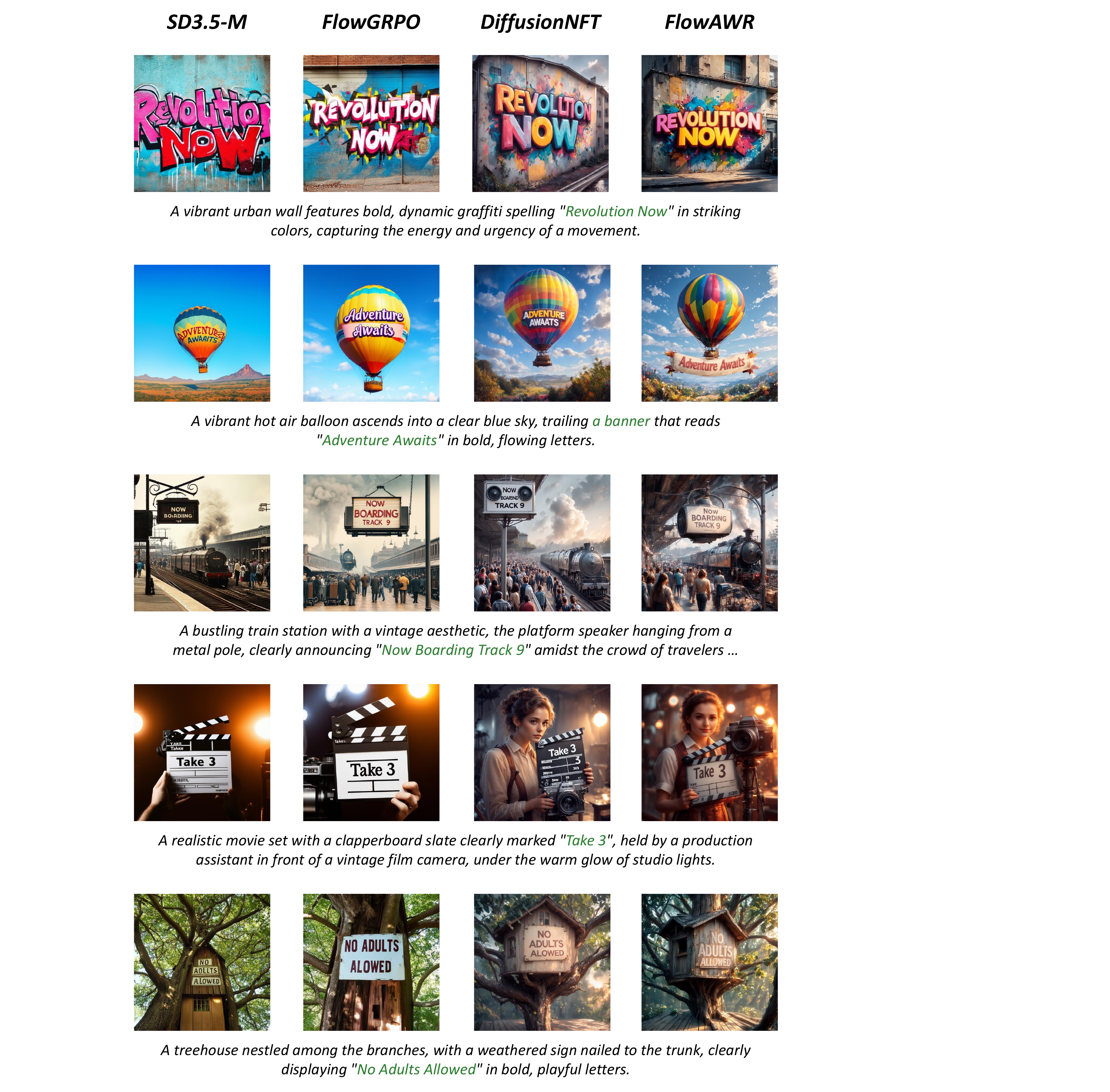}
    \caption{Qualitative comparison on OCR tasks.}
    \label{fig:qual_ocr}
\end{figure*}

\begin{figure*}[t]
    \centering
    \includegraphics[width=0.96\linewidth]{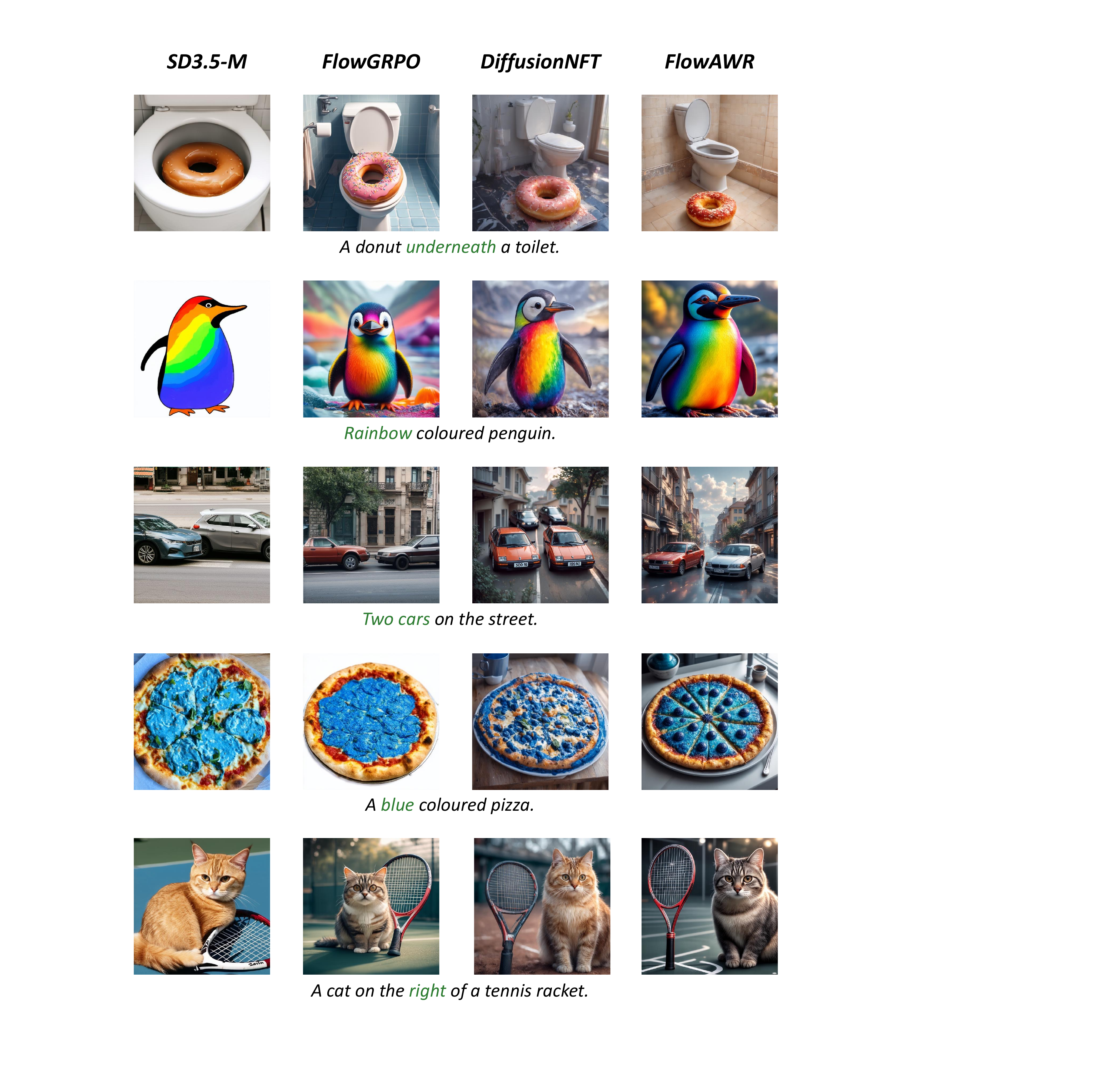}
    \caption{Qualitative comparison on the DrawBench benchmark.}
    \label{fig:qual_drawbench}
\end{figure*}

\end{document}